\documentclass[11pt,a4paper]{article}

\usepackage[utf8]{inputenc}
\usepackage[T1]{fontenc}
\usepackage{lmodern}
\usepackage[margin=1in,headheight=14pt]{geometry}
\usepackage{microtype}
\usepackage{graphicx}
\usepackage{amsmath}
\usepackage{amssymb}
\usepackage{amsthm}
\usepackage{bm}
\usepackage{mathrsfs}
\usepackage{algorithm}
\usepackage{algpseudocode}
\usepackage{booktabs}
\usepackage{array}
\usepackage{multirow}
\usepackage{enumitem}
\usepackage{comment}
\usepackage[numbers,sort&compress]{natbib}
\usepackage{authblk}
\usepackage{caption}
\usepackage{fancyhdr}
\usepackage[hidelinks]{hyperref}

\hypersetup{
  pdfauthor={Tim Steinert and David Ginsbourger},
  pdftitle={Triply-Scalable Equivariant Gaussian Process Modeling},
  pdfsubject={Gaussian processes, equivariance, and scalable inference},
  pdfkeywords={equivariance, sparse Gaussian processes, matrix-valued kernels, uncertainty quantification, molecular machine learning}
}
\AtBeginDocument{%
  }

\providecommand{\Description}[1]{}

\newcommand{\PiA}{\boldsymbol{\Pi}_s}
\newcommand{\Kpi}{K_{\boldsymbol{\Pi}}}
\newcommand{\sectionmap}{s}
\newcommand{\KA}{K_{\overline{A}}}
\newcommand{\R}{\mathbb{R}}

\newcommand{\Kbase}{K_o}
\newcommand{\bff}{\boldsymbol{f}}
\newcommand{\bmm}{\boldsymbol{m}}
\newcommand{\bmu}{\boldsymbol{\mu}}

\newcommand{\bx}{\boldsymbol{x}}
\newcommand{\bxast}{\boldsymbol{x^\ast}}

\newcommand{\by}{\boldsymbol{y}}
\newcommand{\bz}{\boldsymbol{z}}
\newcommand{\ba}{\boldsymbol{a}}
\newcommand{\bZ}{\boldsymbol{Z}}
\newcommand{\be}{\boldsymbol{e}}
\newcommand{\bv}{\boldsymbol{v}}

\newcommand{\btheta}{\boldsymbol{\theta}}
\newcommand{\bphi}{\boldsymbol{\phi}}
\newtheorem{thm}{Theorem}[section]
\newtheorem{cor}[thm]{Corollary}

\newtheorem{lem}[thm]{Lemma}

\newtheorem{rem}[thm]{Remark}

\begin{document}
\title{Triply-Scalable Equivariant Gaussian Process Modeling}

\author{\href{https://orcid.org/0009-0005-1290-2313}{Tim Steinert}$^{\ast}$}
\author{\href{https://orcid.org/0000-0003-2724-2678}{David Ginsbourger}}
\affil{Institute of Mathematical Statistics and Actuarial Science, University of Bern, Bern, Switzerland\\
$^{\ast}$Corresponding author: \href{mailto:tim.steinert@unibe.ch}{\texttt{tim.steinert@unibe.ch}}}
\date{}

\renewcommand{\Authfont}{\large}
\renewcommand{\Affilfont}{\footnotesize}
\setlength{\affilsep}{0.5em}

\maketitle
\thispagestyle{plain}
\enlargethispage{3\baselineskip}
\vspace{-3.5em}

\begin{figure}[!htbp]
  \centering
  \includegraphics[width=\textwidth]{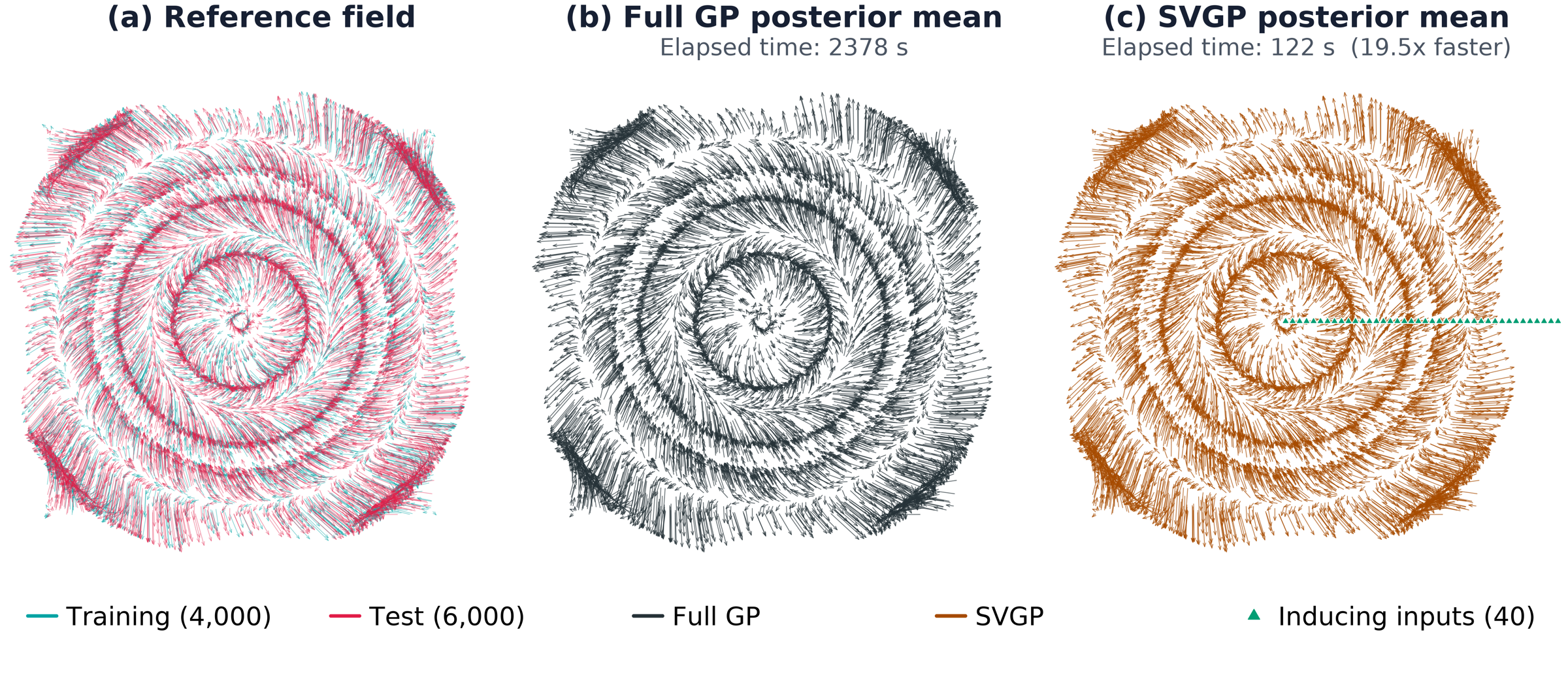}
  \caption{$\mathrm{SO}(2)$-equivariant GP regression with the integration-free equivariant kernel $K_{\Pi}$; see Experiment~\ref{subsubsec:so2-exp1}. Left: a stochastically equivariant GP realization serving as ground truth, split into 40\% training and 60\% test data. Middle and right: posterior means from a full GP and a sparse variational GP (SVGP). The SVGP uses inducing points in the fundamental region and achieves substantial speedups while preserving equivariance.}
  \Description{Three panels compare an SO(2)-equivariant vector field with its posterior reconstructions. The left panel overlays training and held-out vectors from the reference field. The middle panel shows the full-GP posterior mean. The right panel shows the SVGP posterior mean and marks the inducing inputs.}
  \label{fig:teaser}
\end{figure}

\begin{abstract}
Gaussian processes (GPs) provide principled probabilistic predictions while encoding prior knowledge, including equivariances. Yet, their use in large-scale scientific problems is limited by computational cost. Equivariant neural networks are common but typically lack the uncertainty quantification offered by GPs, which is valuable in applications such as molecular research. High-dimensional inputs and large symmetry groups further demand scalability.  We establish results pertaining to the interplay of GP equivariance and conditioning and leverage them to obtain equivariant sparse GPs through suitable mean functions and covariance kernels. We instantiate this framework with a flexible class of integration-free equivariant kernels, yielding scalable and data-efficient GP inference. In particular, we introduce triply scalable equivariant Gaussian processes.

We employ equivariant sparse variational Gaussian processes for $\mathrm{SO}(2)$-equivariant vector fields and molecular property prediction. Alongside the SVGP, we develop a matrix-free equivariant full-GP implementation that combines an exact Kronecker reduction with preconditioned conjugate-gradient solves, enabling fast and scalable evaluation of the full joint predictive density. We further compare different approaches for selecting inducing points in the equivariant sparse GP models. Our test cases include synthetic $\mathrm{SO}(2)$-equivariant fields as well as the prediction of electric dipole moments of N-methylformamide based on quantum chemistry simulations, achieving accurate, uncertainty-aware predictions at a fraction of the computational cost of classical GP inference.
\end{abstract}

\noindent\textbf{Keywords:} equivariance, sparse Gaussian processes, matrix-valued kernels, uncertainty quantification, molecular machine learning
\medskip
\newpage
\section{Introduction}
\label{sec:introduction}
The incorporation of structural knowledge such as physical laws into machine learning models has gained significant attention as a means of improving predictive accuracy while maintaining physical realism. Among the most important forms of structure are equivariances, where transformations of the inputs induce predictable transformations of the outputs. In molecular quantum chemistry, for instance, applying a simultaneous rigid motion to all atom positions rotates vector-valued properties such as electric dipole moments accordingly. The value of such equivariances is well established in deep learning, where group-equivariant architectures exploit information across entire orbits of a group action from a single data point and thereby achieve powerful large-scale symmetry-aware models \citep{cohen2016group,weiler20183d,thomas2018tensor,fuchs2020se,pmlr-v139-schutt21a,batzner20223,kovacs2023evaluation,satorras2021n}. Despite their empirical success, however, such models typically do not yield probabilistically closed posteriors with exact equivariance guarantees, which limits their direct use in settings where calibrated uncertainty quantification is required.

Kernel methods provide a complementary perspective by encoding equivariance directly in the covariance structure of a stochastic process. Classical constructions achieve this through group averaging or symmetry-adapted representations \citep{PhysRevB.95.214302,holderrieth2021equivariant,Soapgp}, while derivative-based approaches model vector fields as gradients of invariant scalar potentials \cite{SS2012,berlinghieri2023gaussian,Sun_2022}. The latter is highly effective when a potential-based description is appropriate, but it imposes a specific structural assumption that does not apply to all equivariant vector-valued targets. Within Gaussian process (GP) modelling, incorporating equivariances is particularly appealing because GPs provide an explicit probabilistic framework that naturally supports uncertainty quantification, active learning, and decision-making under uncertainty \citep{moss2020gaussian,griffiths2022data,rankovic2024bayesian}. At the same time, it is technically demanding: enforcing equivariance for vector-valued targets requires matrix-valued covariance kernels that satisfy positive definiteness under structural constraints, which significantly restricts admissible constructions. This has led to a growing literature on structured GPs, including curl- or divergence-free models \cite{SS2012} and invariant or equivariant kernels derived from physical symmetries \cite{AFST_2012_6_21_3_501_0,GINSBOURGER2016117,van2018learning,holderrieth2021equivariant,henderson2023pde}. A persistent obstacle in this line of work is computational. Many principled equivariant kernels rely on explicit group integration or Haar averaging \citep{JMLR:v8:reisert07a}, or on representation-theoretic basis expansions \citep{Soapgp}, which are mathematically elegant but often computationally demanding. This burden becomes particularly pronounced for vector-valued outputs and molecular systems, where kernel evaluation itself can dominate the overall cost, leading to a trade-off between expressiveness and efficiency. Recent work \citep{steinertintegration} addressed this issue by characterizing stochastic equivariance for centred second-order vector-valued random fields and introducing {integration-free} equivariant kernel matrix functions constructed via projections onto fundamental regions \citep{aslan23a,AFST_2012_6_21_3_501_0}. These constructions avoid explicit group integration while retaining the flexibility of a chosen base kernel, thereby enabling lightweight and stable equivariant GPs for vector fields and molecular dipole moments.

Fast equivariant kernels alone, however, do not yet make equivariant GP modelling scientifically useful at scale. For practical probabilistic modelling, equivariance must hold at the level of full posterior distributions and remain valid under the scalable GP approximations required by modern datasets. Understanding whether and how equivariance interacts with conditioning and approximation is therefore essential for connecting equivariant kernel design with large-scale inference. Our main contributions revolve around the interplay between stochastic equivariance of GPs and of their conditional distributions. In particular, broad classes of sparse and approximate GP constructions inherit stochastic equivariance both a priori and a posteriori, which in turn bridges the gap between integration-free equivariant kernels and scalable probabilistic GP modelling. We instantiate this using sparse variational Gaussian processes and PCG-based matrix-free full-GP inference, and demonstrate the resulting models on $\mathrm{SO}(2)$-equivariant examples and on emulating dipole moments of N-methylformamide resulting from quantum chemistry computations.

\paragraph{Contributions}
The main contributions of this work are threefold:
\begin{enumerate}
    \item We prove that stochastic equivariance is preserved under GP conditioning (Theorem~\ref{thm:equivariant-conditioning}), so conditional GPs inherit stochastic equivariance whenever the unconditional mean function and matrix-valued covariance kernel are equivariant. We also prove in Theorem~\ref{thm:finite response equiv} and the subsequent corollary that conditional equivariance for a finite number of realized responses is sufficient to conclude that equivariance holds unconditionally.
    \item We show that common scalable (sparse) GP constructions, including inducing-point and related approximations, preserve stochastic equivariance when built on such kernels.
    \item Building on integration-free equivariant kernel matrix functions \cite{steinertintegration}, we provide large-scale stochastically equivariant GP models via equivariant sparse variational inference and PCG-based matrix-free full-GP inference, demonstrated on $\mathrm{SO}(2)$-equivariant fields and molecular dipole moment prediction.
\end{enumerate}

\paragraph{Outline.} Section~\ref{subsec:background} reviews vector-valued GPs, followed by an introduction to stochastic equivariance in Section~\ref{subsec:inv-eq-background}. Section~\ref{sec:sparse-gps} reviews scalable inference paradigms. Section~\ref{sec:eq_gps} then establishes propagation of stochastic equivariance under conditioning and its consequences for scalable GPs. Section~\ref{sec:triply-scalable} presents the numerical experiments, followed by the conclusion in Section~\ref{sec:discussion}.

\subsection{Background}
\label{subsec:background}

\newcommand{\myf}{\boldsymbol{f}}

\subsubsection{Basics of vector-valued Gaussian processes}
\label{subsec:vv-gps}
All random quantities are defined on a common probability space $(\Omega,\mathcal{A},\mathbb{P})$. We consider mappings from 
a set $D$ to $\mathbb{R}^p$, and while our constructs and results hold in more generality, we often assume for simplicity that $D \subset \mathbb{R}^d$. Throughout the paper, vector-valued quantities are written in boldface and matrix-valued quantities in uppercase (In the scalar-output review of Section~\ref{sec:sparse-gps}, we retain standard scalar GP notation).
For $\myf(\boldsymbol{x})\in\mathbb{R}^p$, we write
\begin{displaymath}\myf(\boldsymbol{x}) = \bigl( f^{(1)}(\boldsymbol{x}),\dots,f^{(p)}(\boldsymbol{x}) \bigr),\end{displaymath}
and use superscripts in parentheses for output coordinates, reserving subscripts $i,j$ for input locations.
While the notation 
$\bigl(\myf(\boldsymbol{x})\bigr)_{\boldsymbol{x}\in D}$ 
is used for $\mathbb{R}^p$-valued Gaussian processes indexed by $D$, we liberally use the notation $\myf(\boldsymbol{x})$ to also speak of the unknown fixed mapping to be approximated, as often done in the machine-learning literature (from a Bayesian perspective, $\myf$ is a fixed but unknown mapping with a GP prior). 

A vector-valued GP is characterized in distribution by its mean function (throughout assumed zero unless specified otherwise) and a matrix-valued covariance kernel
$K : D \times D \to \mathbb{R}^{p\times p}$, defined componentwise by
\begin{displaymath}\bigl[K(\boldsymbol{x},\boldsymbol{x}')\bigr]_{rs} = \operatorname{Cov}\!\left(f^{(r)}(\boldsymbol{x}), f^{(s)}(\boldsymbol{x}')\right).\end{displaymath}
For any finite collection of inputs in $D,$ $X =(\boldsymbol{x}_1, \ldots,\boldsymbol{x}_n)$, we define the stacked vector \begin{displaymath}\myf(X) := \bigl( \myf(\boldsymbol{x}_1);\dots; \myf(\boldsymbol{x}_n) \bigr) \in \mathbb{R}^{pn}.\end{displaymath}

\noindent
Under the centred GP assumption on $\myf$, $\myf(X)$ is a multivariate Gaussian vector,
\begin{displaymath}\myf(X) \sim \mathcal{N}\bigl(\bm 0,\, K(X,X)\bigr),\end{displaymath}
where $K(X,X)\in\mathbb{R}^{pn\times pn}$ is made of $n\times n$ blocks of dimension $p\times p$ with $(i,j)$-block $K(\boldsymbol{x}_i,\boldsymbol{x}_j)$. Observed data are modeled by $\by$ assumed to be a realization from
\begin{displaymath}\boldsymbol{Y}=\myf(X)+\boldsymbol{\varepsilon},\end{displaymath}
where $\boldsymbol{\varepsilon}\sim\mathcal{N}(\bm 0,\sigma^2 I_{pn})$ is independent of $\myf$. For test inputs $\boldsymbol{x}_\ast, \boldsymbol{x}_\ast'$, the posterior GP is obtained by Gaussian conditioning. Throughout the paper, we use the shorthand notation
\begin{equation*}
    (\,\cdot\,\mid \boldsymbol{Y}=\by)\equiv(\,\cdot\,\mid \by),
\end{equation*}
that is, probabilistic quantities conditional on the observation events such as $\boldsymbol{Y}=\by$ are written simply as conditional on the realization $\by$. Writing $K:=K(X,X),$ the posterior mean and covariance are given by
\begin{align*}
\mathbb{E}[\myf(\boldsymbol{x}_\ast)\mid \by]
&:=
\mathbb{E}[\myf(\boldsymbol{x}_\ast)\mid \myf(X)+\boldsymbol{\varepsilon}=\by]
=
K(\boldsymbol{x}_\ast,X)\,(K+\sigma^2 I)^{-1}\by,\\
\operatorname{Cov}\bigl(\myf(\boldsymbol{x}_\ast),\myf(\boldsymbol{x}_\ast')\mid \by\bigr)
&:=\operatorname{Cov}\bigl(
\myf(\boldsymbol{x}_\ast),\myf(\boldsymbol{x}_\ast')
\mid \myf(X)+\boldsymbol{\varepsilon}=\by
\bigr)\\
&=
K(\boldsymbol{x}_\ast,\boldsymbol{x}_\ast')
-
K(\boldsymbol{x}_\ast,X)\,(K+\sigma^2 I)^{-1}K(X,\boldsymbol{x}_\ast').
\end{align*}
Kernel and likelihood hyperparameters may be selected by maximizing the marginal likelihood, by MAP estimation under priors on the hyperparameters, by predictive criteria such as cross-validation, or, in sparse variational settings, by maximizing an evidence lower bound \citep[Chapter~5]{GPsforMLbook}. See also \citep{Titsias2009VariationalLO,Hensman2013GaussianPF} for the variational case.
In the experiments below we focus on likelihood- and ELBO-based estimation.
For a fixed prior GP, marginal likelihood estimation takes the form
\begin{equation}\label{eq:likelihood} l(\btheta)=\log p(\by\mid \btheta) = -\tfrac12 \by^\top (K+\sigma^2 I)^{-1}\by -\tfrac12 \log\det(K+\sigma^2 I) -\tfrac{pn}{2}\log(2\pi).\end{equation}

\begin{table}[t]
\centering
\small
\caption{Main notation used throughout the paper.}
\begin{tabular}{@{}ll@{}}
\toprule
Symbol & Meaning \\
\midrule
$D \subset \mathbb{R}^d$ & input domain \\
$\boldsymbol{x},\boldsymbol{x}'$ & input locations in $D$ \\
$\myf=(\myf(\boldsymbol{x}))_{\boldsymbol{x}\in D}$ & $\mathbb{R}^p$-valued Gaussian process \\
$\myf(\boldsymbol{x}) \in \mathbb{R}^p$ &  (GP) response vector at input $\boldsymbol{x}$ \\
$f^{(r)}(\boldsymbol{x})$ & $r$th component of $\myf(\boldsymbol{x})$ \\
$K(\boldsymbol{x},\boldsymbol{x}') \in \mathbb{R}^{p\times p}$ & matrix-valued covariance kernel \\
$[K(\boldsymbol{x},\boldsymbol{x}')]_{ij}$ & $(i,j)$ entry of the kernel matrix \\
$X=(\boldsymbol{x}_1,\dots,\boldsymbol{x}_n)$ & training inputs \\
$\myf(X)$ & stacked training vector \\
$\boldsymbol{y}_i \in \mathbb{R}^p$ & observation at input $\boldsymbol{x}_i$ \\
$\by$ & stacked observed training vector \\
$\btheta$ & hyperparameter vector \\
$K=K(X,X)$ & block Gram matrix on the training inputs\\
\bottomrule
\end{tabular}
\end{table}

In the vector-valued equivariant setting, the central modelling object is therefore a matrix-valued covariance kernel subject to symmetry constraints. The next subsection makes these constraints explicit and studies the corresponding notions of invariance and equivariance.

\subsection{Invariances and equivariances in GP  models}
\label{subsec:inv-eq-background}

To describe invariances and equivariances, we follow the  equivariant kernel-based modelling setup from \cite{JMLR:v8:reisert07a,steinertintegration}.
Let $(G,\circ)$ be a 
group acting on $D$ via a 
left group action $\star : G \times D \to D$ (thus satisfying $e \star \bx = \bx$ and $g \star (h \star \bx) = (g \circ h) \star \bx$). We further assume $G$ to be a linear group with a finite-dimensional representation  $\rho : G \to \mathbb{R}^{p \times p}$ (thus satisfying $\rho_{g \circ h} = \rho_g \rho_h$). In this context, a deterministic mapping $\boldsymbol{\varphi} : D \to \mathbb{R}^p$ is said equivariant if for any $\bx \in D,$ $g\in G$ holds  \begin{math}
    \boldsymbol{\varphi}(g \star \bx)=\rho_g \boldsymbol{\varphi}(\bx).
\end{math}
Let us note that this property can be cast as being in the null space of combination of matrix-weighted composition operators.
Indeed, revisiting the composition-operator viewpoint of \cite{GINSBOURGER2016117} in the present vector-valued setting, any mapping $\bm v : D \to D$ induces a composition operator \begin{displaymath}T_{\bm v} : \boldsymbol{\varphi}\mapsto \boldsymbol{\varphi}\circ \bm v.\end{displaymath}
More generally, for mappings $\bm v_1,\dots,\bm v_q : D \to D$ and matrix weights $A_1,\dots,A_q \in \mathbb{R}^{p\times p}$, we call
\begin{displaymath}T=\sum_{i=1}^q A_i T_{\bm v_i}\end{displaymath}
a combination of matrix-weighted composition operators. 
Invariance is obtained as the special case of the trivial representation, i.e. when $\rho_g=I_p$ for all $g\in G$, in which case equivariance reduces to $\boldsymbol{\varphi}(g\star\bx)=\boldsymbol{\varphi}(\bx)$.
For fixed $g\in G$, letting $\bm v_g(\bx)=g\star\bx$, equivariance can be rewritten as invariance under the particular operator
\begin{displaymath}T_g = T_{\bm v_g}-\rho_g T_{\mathrm{id}}, \qquad\text{that is}\qquad (T_g\boldsymbol{\varphi})(\bx) := \boldsymbol{\varphi}(g\star\bx)-\rho_g\boldsymbol{\varphi}(\bx),\end{displaymath}
so that $\boldsymbol{\varphi}$ is equivariant if and only if $T_g\boldsymbol{\varphi}=0$ for all $g\in G$.
Thus equivariance appears as a degeneracy condition under a combination of matrix-weighted composition operators, while invariance corresponds to the same construction with $\rho_g=I_p$.
Analogously, when we consider an $\mathbb{R}^p$-valued random field $\bZ = (\bZ_{\bx})_{\bx \in D},$ stochastic equivariance is defined by
\begin{equation*}
\forall \bx \in D, \ g \in G, \qquad \mathbb{P}\left(T_g\left(\bZ\right)_{\bx}=0\right)=1.\end{equation*} As proven in \citep{steinertintegration}, in the second-order, zero mean case this is equivalent to \begin{equation}\label{eq:kernel-equivariance} \forall \bx,\bx' \in D,\ \forall g,h \in G, \qquad K(g \star \bx, h \star \bx') = \rho_g K(\bx,\bx') \rho_h^\top .\end{equation}
In the same operator language, for each fixed $\bx'\in D$ the kernel feature map $\bx\mapsto K(\bx,\bx')$ is $\mathbb{R}^{p\times p}$-valued, and equivariance in the first argument can be written as
\begin{displaymath}(T_g(K(\cdot,\bx')))(\bx) := K(g\star\bx,\bx')-\rho_gK(\bx,\bx'),\end{displaymath}
so that $T_g(K(\cdot,\bx'))=0$ for all $g\in G$ and $\bx'\in D$.
Applying the analogous condition in the second argument yields the full kernel identity \eqref{eq:kernel-equivariance}.
This condition is standard in kernel-based equivariant learning \citep{JMLR:v8:reisert07a} and is stronger than the diagonal constraint $K(g \star \bx, g \star \bx') = \rho_g K(\bx,\bx') \rho_g^\top$ commonly used in the literature (see e.g. \citep{holderrieth2021equivariant}), which provides a weaker sufficient condition of equivariant posterior means.

Invariant kernels correspond to the same condition under the trivial representation and therefore satisfy
\begin{displaymath}K(g\star \bx, h\star \bx') = K(\bx,\bx') \qquad \forall g,h\in G.\end{displaymath}
In practice, invariance is typically enforced either by constructing invariant descriptors or distances, or by symmetrizing a base kernel with respect to the group action.
Invariant kernel constructions and their use in Gaussian process modelling are discussed, for example, in \cite{AFST_2012_6_21_3_501_0} and \cite{van2018learning}.

\subsubsection{Reynolds operators for invariance and equivariance}\label{subsec:Reynolds}
For finite groups, invariance and equivariance can be enforced by explicit group averaging.
Let $G$ be a finite group acting on $D$ via $\star$, and let $\rho : G \to \mathbb{R}^{p\times p}$ be a 
representation. 
Given a matrix-valued base kernel $\Kbase$, the equivariant Reynolds operator produces a new kernel
\begin{equation}\label{eq:reynolds-equivariant} K^{R}(\bx,\bx') = \frac{1}{|G|^2} \sum_{g,h\in G} \rho_g^{\top}\, \Kbase(g\star \bx, h\star \bx')\, \rho_h .\end{equation}
The kernel $K^{R}$ is equivariant with respect to $\star$ and $\rho$, and remains positive definite whenever $\Kbase$ is.
In the special case where $\rho$ is the trivial representation, Eq.~\eqref{eq:reynolds-equivariant} reduces to the standard
Reynolds operator enforcing invariance. Reynolds operators provide a discrete analogue of Haar integration and are particularly useful for permutation, reflection, or discrete rotational symmetries.

\subsubsection{Equivariant matrix-valued kernels via group integration}
Following the general framework of \cite{JMLR:v8:reisert07a}, assume further that $G$ is compact and unimodular and that $\rho : G \to \mathbb{R}^{p\times p}$ is a continuous unitary representation. Under these assumptions, equivariant matrix-valued kernels can be constructed by Haar integration.

Given a base matrix-valued kernel $\Kbase : D\times D \to \mathbb{R}^{p\times p},$ 
one obtains an equivariant kernel satisfying \eqref{eq:kernel-equivariance} by
\begin{equation}\label{eq:double-integral-kernel} K_{\int}(\boldsymbol{x},\boldsymbol{x}') = \int_{G^2} \rho_g^{\top}\, \Kbase(g\star \boldsymbol{x}, h\star \boldsymbol{x'}) \,\rho_h \;\mathrm{d}g\,\mathrm{d}h,\end{equation}
where $\mathrm{d}g$ denotes the normalized Haar measure on $G$.
Equivariance of $K_{\int}$ follows from the translation invariance of the Haar measure and unimodularity of the representation.
If $\Kbase$ already satisfies the equivariance condition
$ \Kbase(g\star \bx, h\star \bx') = \rho_g \Kbase(\bx,\bx') \rho_h^\top$,
normalization implies $K_{\int}=\Kbase$.
While this approach provides a canonical construction of equivariant kernels, the double integration over $G^2$ can make
equivariant Gaussian process modelling computationally demanding. By giving up expressiveness of the base kernel by taking a scalar-valued $k_o$ which is invariant, one can reduce the double integration in \eqref{eq:double-integral-kernel} to a single, more tractable integral to obtain
\begin{displaymath}K_{\int}(\boldsymbol{x},\boldsymbol{x}') = \int_{G} k_o(\boldsymbol{x}, g\star \boldsymbol{x'})\rho_g \;\mathrm{d}g,\end{displaymath}
which has also been used in \cite{PhysRevB.95.214302}.


\subsubsection{Integration-free equivariant kernels via fundamental regions}
Exact GP inference scales as $\mathcal{O}(n^3)$ time and $\mathcal{O}(n^2)$ memory. There exist well established approaches to scalable GP approximations which we will recall in Section \ref{sec:sparse-gps}. Scalable inference in the vector-valued equivariant case is however meaningless when the computational complexity is dominated by cost of evaluating the kernel. If we wish to have a flexible equivariant kernel, we therefore must resort to an integration-free approach that allows for general base kernels but circumvents the need for numerical quadrature that would be required to evaluate \eqref{eq:double-integral-kernel}. An alternative approach to explicit group integration, proposed in \citep{steinertintegration}, exploits the geometry of group actions. A subset $A\subset D$ is called a fundamental region of the action if $G\star \overline{A} = D$ and $(g\star A)\cap A=\emptyset$ for all $g\in G\setminus\{e\}$. For such an $A$ and any $\boldsymbol{x}\in D$, there exists at least one $g\in G$ such that $g\star\boldsymbol{x}\in\overline{A},$ with $g$ being unique whenever $\boldsymbol{x}$ lies in the interior $G\star A.$

A section is any mapping $s : D\to G$ satisfying
\begin{displaymath}s(\boldsymbol{x})\star\boldsymbol{x}\in\overline{A},\end{displaymath}
and we denote by $\PiA : D\to\overline{A}$ the associated projection.
Given any matrix-valued kernel $\KA$ on $\overline{A}\times\overline{A}$, the kernel
\begin{equation}\label{eq:fundamental region kernel} \Kpi(\boldsymbol{x}, \boldsymbol{x}') = \rho_{\sectionmap(\boldsymbol{x})}^{\top} \KA\bigl(\PiA(\boldsymbol{x}), \PiA(\boldsymbol{x}')\bigr) \rho_{\sectionmap(\boldsymbol{x'})}\end{equation}
defines an equivariant matrix-valued kernel on $(G\star A)\times(G\star A)$.
This construction avoids explicit group integration while retaining the full flexibility of the base kernel. Stochastic equivariance holds globally for free group actions and holds up to boundary effects otherwise.

\subsubsection{Gradient-based equivariance and derivative kernels.}
A further mechanism for equivariance exploits additional physical structure of the target quantity.
For molecular dipole moments, \cite{Sun_2022} model the dipole as the gradient of an invariant scalar potential.
In Gaussian process models, this leads to covariance kernels obtained as mixed derivatives of a base kernel,
schematically of the form
\begin{displaymath}K_{\mu}(x,x') = \nabla_x \nabla_{x'} k(x,x').\end{displaymath}
Such derivative kernels yield equivariance by construction but rely on the assumption that the vector-valued target arises as a gradient field.

\section{Sparse and approximate GPs}
\label{sec:sparse-gps}

The training cost of $\mathcal{O}(n^3)$ time and $\mathcal{O}(n^2)$ memory required for GP regression comes from solving linear systems with $K+\sigma^2 I.$ When $n$ is large, one therefore resorts either 
\begin{itemize}
\item to approximations that reduce the effective rank of $K$, 
\item or to introduce inducing variables and optimize a tractable variational objective, 
\item or to replace the kernel by structured or feature-based surrogates, 
\item or to keep the exact kernel but approximate the linear solves via iterative methods. 
\end{itemize}

For clarity we present the scalar-output case. The vector-valued case follows by block-matrix notation as in Section~\ref{subsec:vv-gps}. Although the  GP is scalar-valued in this section, finite-dimensional observation, inducing, and weight vectors are written in bold to remain consistent with the global notation. We consider the scalar-valued regression model $f\sim\mathrm{GP}(0,k)$ with $y_i$ realization of $f(x_i)+\varepsilon_i$ where $\varepsilon_i\sim\mathcal{N}(0,\sigma^2)$. Let $\by:=(y_1,\dots,y_n)$, $\boldsymbol{f}:=(f(x_1),\dots,f(x_n))$, $X=\left(x_1,\ldots,x_n\right)$, $K=K(X,X)$, and for a test point $x_\ast$ write $\boldsymbol{k}_\ast=k(x_\ast,X)$ and $k_{\ast\ast}=k(x_\ast,x_\ast)$.
The exact posterior predictive distribution is
\begin{displaymath}p(f_\ast\mid \by)=\mathcal{N}\!\big(\boldsymbol{k}_\ast(K+\sigma^2I)^{-1}\by,\;k_{\ast\ast}-\boldsymbol{k}_\ast(K+\sigma^2I)^{-1}\boldsymbol{k}_\ast^\top\big).\end{displaymath}
\subsection{Subset-of-data and Nyström-type approximations}
Early approaches reduce computation by restricting attention to a subset of the training data. In the {subset-of-data} (SoD) approach, one selects $X_s\subset X$ with $|X_s|=m\ll n$ and forms the exact GP posterior using only $(X_s,\by_s)$, where $\by_s$ denotes the subvector of $\by$ corresponding to $X_s$. The predictive distribution is then the exact GP formula with $K(X,X)$ replaced by $K(X_s,X_s)$. A related low-rank approximation is the Nyström method \citep{Williams2000UsingTN}. Given an inducing subset $U\subset X$ with $|U|=m$, one approximates $K(X,X)$ by $\tilde K(X,X)=K(X,U)K(U,U)^{-1}K(U,X)$. This yields a reduced-rank GP obtained by replacing $K$ with $\tilde K$ in the exact posterior formulas.

\subsection{Inducing-point prior approximations}
A major step was to introduce the inducing vector $\boldsymbol{u}:=f(Z)=(f(z_1),\dots,f(z_m))$ at pseudo-inputs $Z=\left(z_1,\dots,z_m\right)$ and the low-rank matrix \begin{math}
    Q(a,b)=K(a,Z)K(Z,Z)^{-1}K(Z,b).
\end{math}
The {subset-of-regressors} (SoR) approximation
\citep{silverman1985some,smola2000sparse,Candela2005AUV} replaces $K(X,X)$ by $Q(X,X)$.
The {deterministic training conditional} (DTC) \citep{csato2002sparse,pmlr-vR4-seeger03a,Candela2005AUV} uses $Q(X,X)$ in the data-fit term but retains $k_{\ast\ast}$ at test points.
To correct the overly confident training variances of these low-rank priors, the
{fully independent training conditional} (FITC) \citep{Snelson2005SparseGP,pmlr-v2-snelson07a,Candela2005AUV} introduces a diagonal correction $\Lambda=\mathrm{diag}(K(X,X)-Q(X,X))+\sigma^2I$.
The related PITC approximation replaces the diagonal correction by a block-diagonal one.

\subsection{Variational inducing-point methods}
Variational inducing-point methods differ fundamentally from earlier sparse GP constructions in that they do not approximate the prior covariance directly. Instead, they approximate the {posterior distribution} of the  function by restricting it to a tractable family of Gaussian processes. The central idea is to introduce a small set of inducing variables that summarize the information in the data and to choose the best approximate posterior, within this restricted family, by minimizing a Kullback--Leibler divergence to the exact posterior.

\subsubsection{Variational free energy (VFE)}
The variational free energy framework of \cite{Titsias2009VariationalLO} provides a principled inducing-point approximation whose objective minimizes the KL divergence to the exact GP posterior and becomes exact when the inducing variables provide a complete representation of the training covariance. 
Using the inducing vector $\boldsymbol{u}=f(Z)$ introduced above, consider a Gaussian variational posterior $q(\boldsymbol{u})=\mathcal{N}(\bmu,A)$ approximating $p(\boldsymbol{u}\mid \by)$.
Crucially, the conditional GP $p(f\mid \boldsymbol{u})$ induced by the prior is kept exact.
The approximate posterior process is then defined as
\begin{displaymath}q(f)=\int p(f\mid \boldsymbol{u})q(\boldsymbol{u})\,d\boldsymbol{u},\end{displaymath}
which is itself a Gaussian process. This equality holds by assuming $\boldsymbol{u}$ is sufficient, i.e. \begin{math}
    p(f\mid \boldsymbol{u},\by)=p(f\mid \boldsymbol{u}).
\end{math}
Its mean and covariance are given by
$m^q(x)=\boldsymbol{k}(x,Z)K(Z,Z)^{-1}\bmu$ and
\begin{displaymath}
k^q(x,x')=k(x,x')-\boldsymbol{k}(x,Z)K(Z,Z)^{-1}
\bigl(K(Z,Z)-A\bigr)K(Z,Z)^{-1}\boldsymbol{k}(Z,x').
\end{displaymath}
The variational parameters $(\bmu,A)$ and hyperparameter vector $\btheta$ are learned by maximizing the evidence lower bound (ELBO)
\begin{displaymath}\mathcal{L}(\bmu,A,\btheta) = \mathbb{E}_{q(f)}[\log p(\by\mid \boldsymbol{f})] - \mathrm{KL}\!\left[q(\boldsymbol{u})\,\|\,p_{\btheta}(\boldsymbol{u})\right], \qquad p_{\btheta}(\boldsymbol{u})=\mathcal{N}(\bm 0,K(Z,Z)).\end{displaymath}
For a Gaussian likelihood $p(\by\mid \boldsymbol{f})=\mathcal{N}(\by;\boldsymbol{f},\sigma^2I)$, the ELBO admits a closed-form expression and the optimal
variational parameters $(\bmu^\star,A^\star)$ can be computed analytically.
Writing $\Sigma=(K(Z,Z)+\sigma^{-2}K(Z,X)K(X,Z))^{-1}$, one obtains
\begin{equation*}\bmu^\star=\sigma^{-2}K(Z,Z)\Sigma K(Z,X)\by \qquad \text{and} \qquad A^\star=K(Z,Z)\Sigma K(Z,Z).\end{equation*}
Substituting these expressions yields a collapsed variational objective depending only on inducing locations $Z$ and
hyperparameters $\btheta$:
\begin{equation}
\begin{aligned}
\mathcal{L}^\star(\btheta,Z)
&=\log\mathcal{N}\!\bigl(\by;\bm 0,\;Q(X,X)+\sigma^2 I\bigr)
-\frac{1}{2\sigma^2}\operatorname{tr}\!\bigl(K(X,X)-Q(X,X)\bigr),\\
Q(X,X)&=K(X,Z)K(Z,Z)^{-1}K(Z,X).
\end{aligned}
\label{eq:SVGP elbo}
\end{equation}
The trace term penalizes inducing-point configurations that fail to capture the full covariance structure, preventing the
approximate posterior from becoming overconfident away from the inducing locations.
This collapsed bound can be optimized with respect to $(\btheta,Z)$ using gradient-based methods at cost $\mathcal{O}(nm^2)$.

\subsubsection{Stochastic variational GP (SVGP)}
The stochastic variational GP of \cite{Hensman2013GaussianPF} extends the VFE framework to large datasets by enabling stochastic optimization. The same variational family $q(\boldsymbol{u})=\mathcal{N}(\bmm,S)$ is retained, meaning we do not plug in the ELBO-optimal $\bmm^\star,\ S^\star.$ The ELBO is rewritten in a form that decomposes over data points, allowing unbiased minibatch estimates of its gradient.
The approximate posterior process has predictive mean and covariance
\begin{align*}
\mathbb{E}_q[f(x_\ast)]
&=\boldsymbol{k}(x_\ast,Z)K(Z,Z)^{-1}\bmm,\\
\operatorname{Cov}_q[f(x_\ast)]
&=k_{\ast\ast}-\boldsymbol{k}(x_\ast,Z)K(Z,Z)^{-1}
\bigl(K(Z,Z)-S\bigr)K(Z,Z)^{-1}\boldsymbol{k}(Z,x_\ast).
\end{align*}
For Gaussian likelihoods, the ELBO decomposes as
\begin{equation}\label{eq:ELBO_svgp_sum} \mathcal{L}(\bmm,S,\btheta)=\sum_{i=1}^n \ell_i(\bmm,S,\btheta)-\mathrm{KL}\!\left[q(\boldsymbol{u})\,\|\,p_{\btheta}(\boldsymbol{u})\right], \qquad \ell_i = \mathbb{E}_{q(f_i)}[\log p(y_i\mid f_i)].\end{equation}
For the Gaussian likelihood $p(y_i\mid f_i)=\mathcal{N}(y_i;f_i,\sigma^2)$, each term admits the closed form \begin{displaymath}\ell_i=-\tfrac12\Bigl(\log(2\pi\sigma^2)+\sigma^{-2}\bigl[(y_i-\mu_i)^2 + \sigma_i^2\bigr]\Bigr),\end{displaymath}
where $\mu_i = \mathbb{E}_q[f_i]$ and $\sigma_i^2 = \mathrm{Var}_q(f_i)$.

Let $\mathcal{B}\subset\{1,\dots,n\}$ be a minibatch of size $B=|\mathcal{B}|$ sampled uniformly (with or without replacement). An unbiased estimator of the ELBO is
\begin{equation}\label{eq:ELBO_minibatch} \widehat{\mathcal{L}}(\bmm,S,\btheta)=\frac{n}{B}\sum_{i\in\mathcal{B}} \ell_i(\bmm,S,\btheta)- \mathrm{KL}\!\left[q(\boldsymbol{u})\,\|\,p_{\btheta}(\boldsymbol{u})\right],\end{equation}
i.e.\ the likelihood contribution is reweighted by $n/B$, while the KL term is kept exact since it does not depend on the data. The corresponding stochastic gradients are unbiased, i.e. $\nabla \mathcal{L}=\mathbb{E}_{\mathcal{B}}\bigl[\nabla \widehat{\mathcal{L}}\bigr].$
In practice, one repeatedly samples minibatches $\mathcal{B}$ and performs updates based on $\nabla \widehat{\mathcal{L}}$. The computational cost per iteration is reduced from $\mathcal{O}(n m^2)$ to $\mathcal{O}(B m^2)$. For numerical stability, $S$ is typically parameterized via its Cholesky factor $S = LL^\top$, and natural gradients are used for $(\bmm,S)$, yielding closed-form updates in the variational parameter space. Hyperparameters $\btheta$ and optionally also inducing locations $Z$ are updated with standard stochastic optimizers (e.g.\ Adam), using the same minibatch estimator.

\subsection{Fourier features and spectral (inter-domain) approximations}
A different approximation route replaces the kernel by an explicit finite-dimensional feature map. Random Fourier features (RFF) \citep{Rahimi2007RandomFF} approximate a stationary kernel by $k(x,x')\approx \boldsymbol{\phi}(x)^\top\boldsymbol{\phi}(x')$ with $\boldsymbol{\phi}:\mathcal{X}\to\mathbb{R}^D$, yielding the Bayesian linear model $f(x)=\boldsymbol{\phi}(x)^\top \boldsymbol{w}$ with $\boldsymbol{w}\sim\mathcal{N}(\bm 0,I_D)$. Letting $\Phi=[\boldsymbol{\phi}(x_1);\dots;\boldsymbol{\phi}(x_n)]$, the posterior is $p(\boldsymbol{w}\mid \by)=\mathcal{N}(\boldsymbol{m}_w,S_w)$ with $S_w=(I+\sigma^{-2}\Phi^\top\Phi)^{-1}$ and $\boldsymbol{m}_w=\sigma^{-2}S_w\Phi^\top \by$, and hence $p(f_\ast\mid \by)=\mathcal{N}(\boldsymbol{\phi}(x_\ast)^\top \boldsymbol{m}_w,\;\boldsymbol{\phi}(x_\ast)^\top S_w\boldsymbol{\phi}(x_\ast))$. The sparse spectrum GP of \cite{JMLR:v11:lazaro-gredilla10a} similarly uses a finite spectral expansion, and can be viewed as an inter-domain inducing-variable method in the Fourier domain.

\subsection{Structured kernel interpolation and matrix inversion-free solvers}
\label{subsec:ski-pcg}
A distinct line of work addresses GP scalability not by modifying the probabilistic model or introducing inducing variables, but by exploiting algebraic structure in the kernel matrix and by replacing direct matrix factorizations with iterative linear-algebra solves.

\paragraph{Structured kernel interpolation (SKI)}
Structured kernel interpolation (SKI), also known as KISS-GP \citep{Wilson2015KernelIF}, approximates the covariance matrix by
interpolating kernel evaluations from a set of inducing points arranged on a regular grid.
Let $G=\{g_1,\dots,g_M\}$ denote grid locations and let $W\in\mathbb{R}^{n\times M}$ be a sparse interpolation matrix such that each
row of $W$ contains only a small number of nonzero entries.
The kernel matrix is approximated as
\begin{displaymath}K(X,X) \approx \tilde K(X,X) := W K(G,G) W^\top,\end{displaymath}
where $K(G,G)$ is the kernel evaluated on the grid.
This approximation defines an exact GP with approximate kernel $\tilde K$, whose covariance admits fast matrix--vector products. When the grid $G$ has Cartesian product structure and the kernel is separable, $K(G,G)$ exhibits Kronecker or Toeplitz structure,
allowing matrix--vector products with $K(G,G)$ to be computed in near-linear time.
Even when such structure is not fully present, the sparsity of $W$ ensures that products with $\tilde K(X,X)$ can be evaluated as
\begin{displaymath}\boldsymbol{v} \mapsto W\bigl(K(G,G)(W^\top \boldsymbol{v})\bigr),\end{displaymath}
with cost dominated by sparse interpolation and dense grid-level kernel operations.
From a computational perspective, SKI is well suited to modern hardware as both sparse interpolation matrices and dense grid-level
kernel products can be implemented efficiently on GPUs, enabling scalable GP inference when an accurate interpolation structure
exists.

\paragraph{Matrix inversion-free exact GP inference} This method introduced by \cite{Gardner2018GPyTorchBM} and scaled to GPs of millions of data points by \cite{pcg} keeps the exact kernel $K(X,X)$ and replaces matrix factorizations by iterative,
matrix-free linear algebra.
Exact GP training and prediction involve quantities of the form $(K(X,X)+\sigma^2 I)^{-1}\by$ and $\log\det(K(X,X)+\sigma^2 I)$,
where $\by$ denotes the vector of observations.
Rather than forming or factorizing the matrix $K(X,X)+\sigma^2 I$, these quantities are computed using conjugate gradient (CG)
methods, which solve linear systems
\begin{displaymath}(K(X,X)+\sigma^2 I)\boldsymbol{\alpha} = \by\end{displaymath}
using only repeated matrix--vector products $\boldsymbol{v}\mapsto (K(X,X)+\sigma^2 I)\boldsymbol{v}$.
If such products can be evaluated efficiently, CG yields accurate approximate solutions in a small number of iterations without materializing the kernel matrix.
This matrix-free formulation enables exact GP inference at dataset sizes far beyond the limits of Cholesky-based approaches and is
particularly well suited to GPU implementations, where kernel evaluations can be parallelized over data points
\citep{Gardner2018GPyTorchBM}. To accelerate convergence, one typically employs preconditioned conjugate gradients (PCG), where the system is transformed using a
symmetric positive definite preconditioner $P\approx K(X,X)+\sigma^2 I$ that is cheap to apply, yielding
\begin{displaymath}P^{-1}(K(X,X)+\sigma^2 I)\boldsymbol{\alpha} = P^{-1}\by.\end{displaymath}
Common choices include diagonal Jacobi, low-rank, or Nyström-based approximations.
Both kernel matrix--vector products and preconditioner applications can be implemented in a matrix-free manner and are compatible
with GPU acceleration.

Exact GP training requires evaluation of $\log\det(K(X,X)+\sigma^2 I)$ and its gradients.
Writing $\log\det(A)=\mathrm{tr}(\log A)$ with $A=K(X,X)+\sigma^2 I$, the log-determinant is approximated using stochastic trace estimation combined with Lanczos methods, \begin{displaymath}\log\det(A)\approx \frac{1}{S}\sum_{s=1}^S \boldsymbol{g}_s^\top (\log A)\, \boldsymbol{g}_s,\end{displaymath}
where $\boldsymbol{g}_s$ are independent normal probe vectors and each bilinear form is evaluated via a Lanczos expansion driven solely by
matrix--vector products with $A$.
Gradients of the log marginal likelihood with respect to the hyperparameter vector $\btheta$ are computed componentwise. For the $j$th component, one has
\begin{displaymath}\frac{\partial}{\partial \theta_j}\log p(\by\mid\btheta)=-\tfrac12 \boldsymbol{\alpha}^\top (\partial_{\theta_j} K_{\btheta})\boldsymbol{\alpha}\;+\;\tfrac12 \mathrm{tr}\!\left( K_{\btheta}^{-1}\partial_{\theta_j} K_{\btheta} \right), \qquad \boldsymbol{\alpha}=(K_{\btheta}+\sigma^2 I)^{-1}\by,\end{displaymath}
where the quadratic form is evaluated using the PCG solution $\boldsymbol{\alpha}$ and the trace term is approximated by a Hutchinson estimator, \begin{math}
    \mathrm{tr}(K_{\btheta}^{-1}\partial_{\theta_j} K_{\btheta})\approx
\frac{1}{S}\sum_{s=1}^S \boldsymbol{g}_s^\top K_{\btheta}^{-1}(\partial_{\theta_j} K_{\btheta} \boldsymbol{g}_s),
\end{math} and each product $K_{\btheta}^{-1}\boldsymbol{g}_s$ is again realized by a PCG solve. Thus all terms required for exact GP training and prediction are approximated using only matrix--vector products with $K_{\btheta}$ and $\partial_{\theta_j} K_{\btheta}$.


\section{Equivariant Sparse GPs}\label{sec:eq_gps}
This section establishes the central theoretical observation of the paper, claiming that stochastic equivariance of a Gaussian process is preserved under conditioning. As a consequence, we will see that predictive posterior distributions, sparse and scalable GP inference schemes inherit equivariance automatically whenever they correspond to exact or approximate conditioning of an equivariant prior GP. This result provides the theoretical foundation for constructing large-scale stochastically equivariant GP models. 

\subsection{Conditional stochastic equivariance of GPs}
We begin by formalizing the relationship between stochastic equivariance of a Gaussian process and stochastic equivariance of its conditional distributions. 

\begin{thm}[Equivalences between unconditional and conditional stochastic equivariance of GPs]\label{thm:equivariant-conditioning}
Let $\bff$ be a centred $\mathbb{R}^{p}$-valued Gaussian process on $(\Omega, \mathcal{A}, \mathbb{P})$ indexed by $D$ with kernel $K \colon D \times D \to \R^{p \times p}$. Let a linear group $G$ act on $D$ via $\star$ and on $\R^{p}$ by a representation $\rho \colon g \mapsto \rho_{g} \in \R^{p \times p}$.
Then the following are equivalent:
\begin{enumerate}[label=(\roman*)]
    \item For all
    $\bx \in D$ and $g \in G$,
    \begin{equation*}\mathbb{P}\big(\bff(g \star \bx) = \rho_{g} \bff(\bx)\big) = 1.\end{equation*}
    \item For \textbf{any} finite collection of inputs in $D,$ $X = \left(\bx_{1},\dots,\bx_{n}\right)$ and $\R^{pn}$-valued random vector $\boldsymbol{\varepsilon}$ on $(\Omega, \mathcal{A}, \mathbb{P}),$ for all $\bx \in D$ and $g \in G$,
    \begin{equation*}\mathbb{P}\big(\bff(g \star \bx) = \rho_{g} \bff(\bx) \,\big|\, \bff(X)+\boldsymbol{\varepsilon}\big)= 1 \quad \text{almost surely.}\end{equation*}
    \item For \textbf{some} finite collection of inputs in $D,$ $X = \left(\bx_{1},\dots,\bx_{n}\right),$ and $\R^{pn}$-valued random vector $\boldsymbol{\varepsilon}$ on $(\Omega, \mathcal{A}, \mathbb{P}),$ for all $\bx \in D$ and $g \in G$,
    \begin{equation*}\mathbb{P}\big(\bff(g \star \bx) = \rho_{g} \bff(\bx) \,\big|\, \bff(X)+\boldsymbol{\varepsilon}\big)= 1 \quad \text{almost surely.}\end{equation*}
\end{enumerate}
\end{thm}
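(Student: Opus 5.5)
The proof is a cycle (i)$\Rightarrow$(ii)$\Rightarrow$(iii)$\Rightarrow$(i), resting on one fact about conditional probabilities. For an event $E\in\mathcal{A}$ and a sub-$\sigma$-algebra $\mathcal{F}$ (here $\mathcal{F}=\sigma(\bff(X)+\boldsymbol{\varepsilon})$), the tower property gives
\begin{displaymath}
\mathbb{E}\bigl[\mathbb{P}(E\mid\mathcal{F})\bigr]=\mathbb{P}(E).
\end{displaymath}
Since $0\le \mathbb{P}(E\mid\mathcal{F})\le 1$ almost surely, we get $\mathbb{P}(E\mid\mathcal{F})=1$ a.s.\ if and only if $\mathbb{P}(E)=1$. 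Indeed, if $\mathbb{P}(E)=1$, then $\mathbb{P}(E^c\mid\mathcal{F})$ is a nonnegative random variable with zero expectation, so it vanishes a.s. Conversely, if $\mathbb{P}(E\mid\mathcal{F})=1$ a.s., taking expectations gives $\mathbb{P}(E)=1$.

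We apply this to the events
\begin{displaymath}
E_{\bx,g}:=\{\bff(g\star\bx)=\rho_g\bff(\bx)\},
\end{displaymath}
which are measurable because $\bff(g\star\bx)$ and $\bff(\bx)$ are random vectors. The equivalence above holds for each fixed pair $(\bx,g)$ separately. We never need to intersect uncountably many events, since every statement in the theorem is quantified pointwise in $(\bx,g)$.

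For (i)$\Rightarrow$(ii), fix any $X$, any $\boldsymbol{\varepsilon}$, any $\bx\in D$ and any $g\in G$. By (i), $\mathbb{P}(E_{\bx,g})=1$, so the fact gives $\mathbb{P}(E_{\bx,g}\mid \bff(X)+\boldsymbol{\varepsilon})=1$ a.s. This uses neither Gaussianity nor any independence of $\boldsymbol{\varepsilon}$ from $\bff$, which is consistent with (ii) allowing an arbitrary random vector $\boldsymbol{\varepsilon}$.

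The implication (ii)$\Rightarrow$(iii) is immediate, since $D$ is nonempty and we may take, say, $n=1$ and $\boldsymbol{\varepsilon}=0$.

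For (iii)$\Rightarrow$(i), fix the $X$ and $\boldsymbol{\varepsilon}$ provided by (iii), and take expectations of $\mathbb{P}(E_{\bx,g}\mid\bff(X)+\boldsymbol{\varepsilon})=1$. This yields $\mathbb{P}(E_{\bx,g})=1$ for every $\bx$ and $g$, which is (i).

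If desired, one can then invoke the kernel characterization \eqref{eq:kernel-equivariance} from \citep{steinertintegration} to restate (i) at the level of $K$, but the theorem does not require it.

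The main obstacle is interpretive rather than technical. One has to fix what ``conditional probability equals 1 almost surely'' means: we read it as a version of $\mathbb{E}[\mathbf{1}_{E}\mid\sigma(\bff(X)+\boldsymbol{\varepsilon})]$, which is unique up to null sets, so the statement does not depend on which version is chosen. One must also check that the null set may depend on $(\bx,g)$, which it may, since the quantifiers sit outside the almost-sure statement. A reader might expect the proof to go through the Gaussian conditioning formulas, checking that the posterior mean and covariance satisfy the equivariance identities. That route is more laborious and only covers the case of Gaussian $\boldsymbol{\varepsilon}$ independent of $\bff$, so the measure-theoretic argument above is both simpler and more general.
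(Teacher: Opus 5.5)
Your proposal is correct and follows essentially the same route as the paper's proof. The paper also fixes $(\bx,g)$, uses the law of total probability $\mathbb{E}[\mathbb{P}(E\mid \bff(X)+\boldsymbol{\varepsilon})]=\mathbb{P}(E)$ together with the bound $\mathbb{P}(E\mid\cdot)\le 1$ to get (i)$\Rightarrow$(ii), notes that (ii)$\Rightarrow$(iii) is trivial, and takes expectations to obtain (iii)$\Rightarrow$(i).
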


\begin{proof}

Fix $\bx\in D$ and $g\in G$, and define the event $E:=\{\bff(g\star\bx)-\rho_g\bff(\bx)=\bm 0\}$. By assumption (i), $\mathbb{P}(E)=1$. Now take any finite collection $X = \left(\bx_{1},\dots,\bx_{n}\right)$ in $D$ and any $\R^{pn}$-valued random vector $\boldsymbol{\varepsilon}$ on $(\Omega, \mathcal{A}, \mathbb{P})$.
Then $\mathbb{P}(E\mid \bff(X)+\boldsymbol{\varepsilon})=1$ almost surely since $\mathbb{E}[\mathbb{P}(E\mid \bff(X)+\boldsymbol{\varepsilon})]=\mathbb{P}(E)=1$ by the law of total probability, and thus $\mathbb{P}(E\mid \bff(X)+\boldsymbol{\varepsilon})<1$ with positive probability would result in a contradiction. This implies (ii) and in turn (iii). Now, assuming (iii), we again obtain by the law of total probability that
\begin{equation}
\mathbb{P}(E)=\mathbb{E}\big[\mathbb{P}(E\mid \bff(X)+\boldsymbol{\varepsilon})\big]=1.
\end{equation}
\end{proof}

While the previous result would remain valid for a broader class of conditioning random elements, the requirements are quite demanding as they involve almost sure statements. The specific form of Gaussian distributions and their associated conditionals makes it actually possible to establish much stronger results. For a start, assuming that $\boldsymbol{\varepsilon}$ is Gaussian and independent of $\bff$, the explicit GP conditioning formulas yield
\begin{equation}
\mathbb{P}\bigl(\bff(g\star \bx)=\rho_g\bff(\bx)\mid \by\bigr)=1
\end{equation}
for any prescribed choice of $\by \in \R^{pn}$. One may then wonder if a sufficient condition of unconditional stochastic equivariance does exist that requires less than almost sure conditional stochastic equivariance. The next results give a positive answer: in particular, for $X$ consisting of one point in $\R^q$, conditional stochastic equivariance for $q+1$ well-chosen conditioning responses is sufficient.



\begin{lem}
\label{lem1}
Let $\boldsymbol{A}$ be a $\R^p$-valued Gaussian vector. If there exists $q\geq 1$, a $\R^q$-valued Gaussian vector $\boldsymbol{B}$ with full-rank covariance matrix $\Sigma_{\boldsymbol{B}}$ and $\boldsymbol{b}_1,\dots,\boldsymbol{b}_{q+1}\in \R^q$ satisfying
$$
\left\{
\begin{array}{l}
(\boldsymbol{A},\boldsymbol{B}) \text{ jointly Gaussian}, \\
\boldsymbol{b}_1-\boldsymbol{b}_{q+1},\dots,\boldsymbol{b}_{q}-\boldsymbol{b}_{q+1} \text{ linearly independent}, \\
\mathbb{P}(\boldsymbol{A}=\bm 0\mid \boldsymbol{B}=\boldsymbol{b}_i)=1 \text{ for } i=1,\dots,q+1,
\end{array}
\right.
$$
then $\mathbb{P}(\boldsymbol{A}=\bm 0)=1$.
\end{lem}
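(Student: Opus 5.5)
The plan is to use the explicit Gaussian conditioning formulas. Write $\bmu_{\boldsymbol{A}}, \bmu_{\boldsymbol{B}}$ for the means, $\Sigma_{\boldsymbol{A}}$ for the covariance of $\boldsymbol{A}$, and $\Sigma_{\boldsymbol{AB}}=\operatorname{Cov}(\boldsymbol{A},\boldsymbol{B})$. Since $\Sigma_{\boldsymbol{B}}$ is invertible, the regular conditional law of $\boldsymbol{A}$ given $\boldsymbol{B}=\boldsymbol{b}$ is, for every $\boldsymbol{b}\in\R^q$, the Gaussian
\begin{displaymath}
\mathcal{N}\bigl(\bmu_{\boldsymbol{A}}+\Sigma_{\boldsymbol{AB}}\Sigma_{\boldsymbol{B}}^{-1}(\boldsymbol{b}-\bmu_{\boldsymbol{B}}),\; \Sigma_{\boldsymbol{A}}-\Sigma_{\boldsymbol{AB}}\Sigma_{\boldsymbol{B}}^{-1}\Sigma_{\boldsymbol{AB}}^\top\bigr).
\end{displaymath}
This is the version fixed pointwise in $\boldsymbol{b}$, which is how the hypothesis $\mathbb{P}(\boldsymbol{A}=\bm 0\mid \boldsymbol{B}=\boldsymbol{b}_i)=1$ is to be read. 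I will state explicitly that the lemma is understood with this canonical continuous version of the conditional distribution, in line with the remark preceding the lemma.

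A Gaussian law on $\R^p$ puts mass one on $\{\bm 0\}$ if and only if it is the Dirac mass at $\bm 0$, i.e. both its mean and its covariance vanish. Hence the hypotheses give two facts.
\begin{enumerate}
\item The conditional covariance vanishes: $\Sigma_{\boldsymbol{A}}-\Sigma_{\boldsymbol{AB}}\Sigma_{\boldsymbol{B}}^{-1}\Sigma_{\boldsymbol{AB}}^\top=0$. This needs only one $\boldsymbol{b}_i$.
\item The affine map $\boldsymbol{b}\mapsto \bmu_{\boldsymbol{A}}+\Sigma_{\boldsymbol{AB}}\Sigma_{\boldsymbol{B}}^{-1}(\boldsymbol{b}-\bmu_{\boldsymbol{B}})$ vanishes at all $q+1$ points $\boldsymbol{b}_1,\dots,\boldsymbol{b}_{q+1}$.
\end{enumerate}

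For the second fact, write $M:=\Sigma_{\boldsymbol{AB}}\Sigma_{\boldsymbol{B}}^{-1}\in\R^{p\times q}$. Subtracting the equation at $\boldsymbol{b}_{q+1}$ from the one at $\boldsymbol{b}_i$ gives $M(\boldsymbol{b}_i-\boldsymbol{b}_{q+1})=\bm 0$ for $i=1,\dots,q$. These $q$ differences are linearly independent in $\R^q$, so they form a basis and $M=0$. Then the equation at $\boldsymbol{b}_{q+1}$ yields $\bmu_{\boldsymbol{A}}=\bm 0$. Since $\Sigma_{\boldsymbol{B}}$ is invertible, $M=0$ also gives $\Sigma_{\boldsymbol{AB}}=0$, and the first fact then reduces to $\Sigma_{\boldsymbol{A}}=0$. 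So $\boldsymbol{A}$ is Gaussian with zero mean and zero covariance, hence $\mathbb{P}(\boldsymbol{A}=\bm 0)=1$.

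The only real obstacle is the first step, the interpretation of conditioning on the null events $\{\boldsymbol{B}=\boldsymbol{b}_i\}$. Conditional probabilities are only defined up to $\mathbb{P}_{\boldsymbol{B}}$-null sets, so a statement at finitely many points is meaningless unless a version is fixed. I would address this by invoking the standard fact that, for jointly Gaussian vectors with nondegenerate $\Sigma_{\boldsymbol{B}}$, the formula above defines a regular conditional distribution that is continuous in $\boldsymbol{b}$, and by taking this version as the meaning of the hypothesis. The rest is elementary linear algebra.
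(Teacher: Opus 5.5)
Your proof is correct and follows essentially the same route as the paper. Both arguments use the explicit Gaussian conditioning formulas and use the $q$ independent differences $\boldsymbol{b}_i-\boldsymbol{b}_{q+1}$ to force $\Sigma_{\boldsymbol{AB}}=0$; you phrase this as $M=\Sigma_{\boldsymbol{AB}}\Sigma_{\boldsymbol{B}}^{-1}$ vanishing on a basis, the paper as $\Sigma_{\boldsymbol{AB}}$ vanishing on the independent vectors $\Sigma_{\boldsymbol{B}}^{-1}(\boldsymbol{b}_j-\boldsymbol{b}_{q+1})$, and both then read off $\mathbb{E}[\boldsymbol{A}]=\bm 0$ and $\Sigma_{\boldsymbol{A}}=0$, with your explicit choice of the continuous version of the conditional law being a clarification the paper leaves implicit.
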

\begin{proof}
For any $i\in\{1,\dots,q+1\}$, $\mathbb{P}(\boldsymbol{A}=\bm 0\mid \boldsymbol{B}=\boldsymbol{b}_i)=1$ implies $\mathbb{E}[\boldsymbol{A}\mid \boldsymbol{B}=\boldsymbol{b}_i]=\bm 0$ and $\operatorname{Var}[\boldsymbol{A}\mid \boldsymbol{B}=\boldsymbol{b}_i]=0$. Since $(\boldsymbol{A},\boldsymbol{B})$ is jointly Gaussian, this gives
\begin{align}
\mathbb{E}[\boldsymbol{A}\mid \boldsymbol{B}=\boldsymbol{b}_i]&=\mathbb{E}[\boldsymbol{A}]+\Sigma_{\boldsymbol{A}\boldsymbol{B}}\Sigma_{\boldsymbol{B}}^{-1}(\boldsymbol{b}_i-\mathbb{E}[\boldsymbol{B}])=\boldsymbol{0},\label{lem1eq1}\\
\operatorname{Var}[\boldsymbol{A}\mid \boldsymbol{B}=\boldsymbol{b}_i]&=\Sigma_{\boldsymbol{A}}-\Sigma_{\boldsymbol{A}\boldsymbol{B}}\Sigma_{\boldsymbol{B}}^{-1}\Sigma_{\boldsymbol{B}\boldsymbol{A}}=0.\label{lem1eq2}
\end{align}
Equating \eqref{lem1eq1} for $i=j\in\{1,\dots,q\}$ and $i=q+1$ yields $\Sigma_{\boldsymbol{A}\boldsymbol{B}}\boldsymbol{v}_j=\bm 0$ with $\boldsymbol{v}_j=\Sigma_{\boldsymbol{B}}^{-1}(\boldsymbol{b}_j-\boldsymbol{b}_{q+1})$. As the vectors $\boldsymbol{b}_j-\boldsymbol{b}_{q+1}$ are linearly independent, so are the $\boldsymbol{v}_j$. Hence $\Sigma_{\boldsymbol{A}\boldsymbol{B}}\boldsymbol{v}_j=\bm 0$ for $j=1,\dots,q$ implies $\Sigma_{\boldsymbol{A}\boldsymbol{B}}=0$. Looking back at \eqref{lem1eq1} and \eqref{lem1eq2}, this implies $\mathbb{E}[\boldsymbol{A}]=\bm 0$ and $\Sigma_{\boldsymbol{A}}=0$.
\end{proof}
The previous lemma extends to non-invertible covariance matrices as follows.
\begin{lem}\label{lem2}
Let $\boldsymbol{A}$ be a $\R^p$-valued Gaussian vector. If there exists $q\geq 1$, a $\R^q$-valued Gaussian vector $\boldsymbol{B}$ possessing a covariance matrix $\Sigma_{\boldsymbol{B}}$ with rank $r\leq q$, and $\boldsymbol{b}_1,\dots,\boldsymbol{b}_{r+1}\in \R^q$ satisfying
$$
\left\{
\begin{array}{l}
(\boldsymbol{A},\boldsymbol{B}) \text{ jointly Gaussian}, \\
\Sigma_{\boldsymbol{B}}^{\dag}(\boldsymbol{b}_1-\boldsymbol{b}_{r+1}),\dots,\Sigma_{\boldsymbol{B}}^{\dag}(\boldsymbol{b}_{r}-\boldsymbol{b}_{r+1}) \text{ linearly independent}, \\
\mathbb{P}(\boldsymbol{A}=\bm 0\mid \boldsymbol{B}=\boldsymbol{b}_i)=1 \text{ for } i=1,\dots,r+1,
\end{array}
\right.
$$
then $\mathbb{P}(\boldsymbol{A}=\bm 0)=1$.
\end{lem}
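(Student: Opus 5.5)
The plan is to mimic the proof of Lemma~\ref{lem1}, with $\Sigma_{\boldsymbol{B}}^{-1}$ replaced by the Moore--Penrose pseudoinverse $\Sigma_{\boldsymbol{B}}^{\dag}$. Write $\boldsymbol{\mu}_{\boldsymbol{B}}=\mathbb{E}[\boldsymbol{B}]$. Since $\Sigma_{\boldsymbol{B}}$ may be singular, $\boldsymbol{B}$ is supported on the affine subspace $\boldsymbol{\mu}_{\boldsymbol{B}}+\operatorname{range}(\Sigma_{\boldsymbol{B}})$. Conditioning on $\boldsymbol{B}=\boldsymbol{b}_i$ is therefore interpreted, as is standard for degenerate Gaussians, through the regular version of the conditional distribution given by the generalized Gaussian conditioning formulas:
\begin{align*}
\mathbb{E}[\boldsymbol{A}\mid \boldsymbol{B}=\boldsymbol{b}]&=\mathbb{E}[\boldsymbol{A}]+\Sigma_{\boldsymbol{A}\boldsymbol{B}}\Sigma_{\boldsymbol{B}}^{\dag}(\boldsymbol{b}-\boldsymbol{\mu}_{\boldsymbol{B}}),\\
\operatorname{Var}[\boldsymbol{A}\mid \boldsymbol{B}=\boldsymbol{b}]&=\Sigma_{\boldsymbol{A}}-\Sigma_{\boldsymbol{A}\boldsymbol{B}}\Sigma_{\boldsymbol{B}}^{\dag}\Sigma_{\boldsymbol{B}\boldsymbol{A}}.
\end{align*}
The first step is to justify these formulas. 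One checks that $\boldsymbol{A}-\Sigma_{\boldsymbol{A}\boldsymbol{B}}\Sigma_{\boldsymbol{B}}^{\dag}\boldsymbol{B}$ is uncorrelated with $\boldsymbol{B}$, hence independent of it by joint Gaussianity. This uses the identity $\Sigma_{\boldsymbol{A}\boldsymbol{B}}\Sigma_{\boldsymbol{B}}^{\dag}\Sigma_{\boldsymbol{B}}=\Sigma_{\boldsymbol{A}\boldsymbol{B}}$, which holds because $\ker\Sigma_{\boldsymbol{B}}\subseteq\ker\Sigma_{\boldsymbol{A}\boldsymbol{B}}$: if $\Sigma_{\boldsymbol{B}}\boldsymbol{w}=\bm 0$ then $\boldsymbol{w}^\top\boldsymbol{B}$ is almost surely constant, so $\Sigma_{\boldsymbol{A}\boldsymbol{B}}\boldsymbol{w}=\operatorname{Cov}(\boldsymbol{A},\boldsymbol{w}^\top\boldsymbol{B})=\bm 0$.

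As in Lemma~\ref{lem1}, the hypothesis $\mathbb{P}(\boldsymbol{A}=\bm 0\mid\boldsymbol{B}=\boldsymbol{b}_i)=1$ forces the conditional mean and conditional variance to vanish for each $i=1,\dots,r+1$. Subtracting the mean equation for $i=r+1$ from the one for $i=j\le r$ gives
\[
\Sigma_{\boldsymbol{A}\boldsymbol{B}}\boldsymbol{v}_j=\bm 0,\qquad \boldsymbol{v}_j:=\Sigma_{\boldsymbol{B}}^{\dag}(\boldsymbol{b}_j-\boldsymbol{b}_{r+1}),\quad j=1,\dots,r.
\]
By hypothesis the $\boldsymbol{v}_j$ are $r$ linearly independent vectors. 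They all lie in $\operatorname{range}(\Sigma_{\boldsymbol{B}}^{\dag})=\operatorname{range}(\Sigma_{\boldsymbol{B}})$, which has dimension $r$, so they span it. Hence $\Sigma_{\boldsymbol{A}\boldsymbol{B}}$ vanishes on $\operatorname{range}(\Sigma_{\boldsymbol{B}})$. By the kernel inclusion from the first step it also vanishes on $\ker\Sigma_{\boldsymbol{B}}=\operatorname{range}(\Sigma_{\boldsymbol{B}})^{\perp}$. Therefore $\Sigma_{\boldsymbol{A}\boldsymbol{B}}=0$.

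Substituting $\Sigma_{\boldsymbol{A}\boldsymbol{B}}=0$ back into the two formulas gives $\mathbb{E}[\boldsymbol{A}]=\bm 0$ and $\Sigma_{\boldsymbol{A}}=0$, so $\boldsymbol{A}=\bm 0$ almost surely. The case $r=0$ is degenerate: then $\boldsymbol{B}$ is almost surely constant, and conditioning on $\boldsymbol{B}=\boldsymbol{b}_1$ is unconditioning, so the conclusion is immediate.

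The main obstacle is making sense of conditioning on $\boldsymbol{B}=\boldsymbol{b}_i$ when $\boldsymbol{b}_i$ may lie off the support of $\boldsymbol{B}$, which is a null event. I would handle this by fixing the pseudoinverse formulas above as the canonical version of the conditional law, extended to all $\boldsymbol{b}\in\R^q$ (continuous in $\boldsymbol{b}$), exactly as the paper implicitly does in Lemma~\ref{lem1}. I would also note that only the component of $\boldsymbol{b}_j-\boldsymbol{b}_{r+1}$ in $\operatorname{range}(\Sigma_{\boldsymbol{B}})$ matters, which is precisely why the independence assumption is stated in terms of $\Sigma_{\boldsymbol{B}}^{\dag}(\boldsymbol{b}_j-\boldsymbol{b}_{r+1})$.
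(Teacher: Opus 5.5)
Your proposal is correct and follows the paper's proof. Both arguments use the pseudoinverse conditioning formulas, difference the mean equations to get $\Sigma_{\boldsymbol{A}\boldsymbol{B}}\boldsymbol{v}_j=\bm 0$, and read off $\mathbb{E}[\boldsymbol{A}]=\bm 0$ and $\Sigma_{\boldsymbol{A}}=0$. Your version adds detail the paper leaves implicit: you justify the generalized formulas, note that the $\boldsymbol{v}_j$ span $\operatorname{range}(\Sigma_{\boldsymbol{B}})$, and push on to $\Sigma_{\boldsymbol{A}\boldsymbol{B}}=0$ via $\ker\Sigma_{\boldsymbol{B}}\subseteq\ker\Sigma_{\boldsymbol{A}\boldsymbol{B}}$, whereas the paper stops at $\Sigma_{\boldsymbol{A}\boldsymbol{B}}\Sigma_{\boldsymbol{B}}^{\dag}=0$, which already suffices.
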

\begin{proof}
For any $i\in\{1,\dots,r+1\}$, $\mathbb{P}(\boldsymbol{A}=\bm 0\mid \boldsymbol{B}=\boldsymbol{b}_i)=1$ implies $\mathbb{E}[\boldsymbol{A}\mid \boldsymbol{B}=\boldsymbol{b}_i]=\bm 0$ and $\operatorname{Var}[\boldsymbol{A}\mid \boldsymbol{B}=\boldsymbol{b}_i]=0$. Since $(\boldsymbol{A},\boldsymbol{B})$ is jointly Gaussian, this gives
\begin{align}
\mathbb{E}[\boldsymbol{A}\mid \boldsymbol{B}=\boldsymbol{b}_i]&=\mathbb{E}[\boldsymbol{A}]+\Sigma_{\boldsymbol{A}\boldsymbol{B}}\Sigma_{\boldsymbol{B}}^{\dag}(\boldsymbol{b}_i-\mathbb{E}[\boldsymbol{B}])=\bm 0,\label{lem2eq1}\\
\operatorname{Var}[\boldsymbol{A}\mid \boldsymbol{B}=\boldsymbol{b}_i]&=\Sigma_{\boldsymbol{A}}-\Sigma_{\boldsymbol{A}\boldsymbol{B}}\Sigma_{\boldsymbol{B}}^{\dag}\Sigma_{\boldsymbol{B}\boldsymbol{A}}=0.\label{lem2eq2}
\end{align}
Equating \eqref{lem2eq1} for $i=j\in\{1,\dots,r\}$ and $i=r+1$ yields $\Sigma_{\boldsymbol{A}\boldsymbol{B}}\boldsymbol{v}_j=\bm 0$ with $\boldsymbol{v}_j=\Sigma_{\boldsymbol{B}}^{\dag}(\boldsymbol{b}_j-\boldsymbol{b}_{r+1})$. As the vectors $\Sigma_{\boldsymbol{B}}^{\dag}(\boldsymbol{b}_j-\boldsymbol{b}_{r+1})$ are linearly independent, $\Sigma_{\boldsymbol{A}\boldsymbol{B}}\boldsymbol{v}_j=\bm 0$ for $j=1,\dots,r$ implies $\Sigma_{\boldsymbol{A}\boldsymbol{B}}\Sigma_{\boldsymbol{B}}^{\dag}=0$. Looking back at \eqref{lem2eq1} and \eqref{lem2eq2}, this implies $\mathbb{E}[\boldsymbol{A}]=\bm 0$ and $\Sigma_{\boldsymbol{A}}=0$.
\end{proof}

\begin{thm}
\label{thm:finite response equiv}
Let $(\bff(\bx))_{\bx\in D}$ be a $\R^p$-valued GP indexed by a set $D$, and let $(G,\circ)$ be a linear group acting on $D$ via $\star$ and possessing a representation $g\in G \mapsto \rho_g \in \R^{p\times p}$. If there exists $\bx_0 \in D$ and $\by_1,\dots,\by_{r+1} \in \R^{p}$, where $r=\operatorname{rk}\bigl(K(\bx_0,\bx_0)\bigr),$
such that
$$
\left\{
\begin{array}{l}
K(\bx_0,\bx_0)^{\dag}(\by_1-\by_{r+1}),\dots,K(\bx_0,\bx_0)^{\dag}(\by_r-\by_{r+1}) \text{ linearly independent}, \\
\text{and } \mathbb{P}\bigl(\bff(g\star \bx)-\rho_g\bff(\bx)=\bm 0 \mid \bff(\bx_0)=\by_i\bigr)=1 \text{ for } i=1,\dots,r+1,
\end{array}
\right.
$$
then $\mathbb{P}\bigl(\bff(g\star \bx)-\rho_g\bff(\bx)=\bm 0\bigr)=1$.

Thus, if $\mathbb{P}\bigl(\bff(g\star \bx)-\rho_g\bff(\bx)=\bm 0 \mid \bff(\bx_0)=\by_i\bigr)=1$ for any $g \in G$, $\bx\in D$, and for $i=1,\dots,r+1$, then $\mathbb{P}\bigl(\bff(g\star \bx)-\rho_g\bff(\bx)=\bm 0\bigr)=1$ for any $g \in G$, $\bx\in D$.
\end{thm}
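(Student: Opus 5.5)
The plan is to reduce the statement directly to Lemma~\ref{lem2}. Fix $g\in G$ and $\bx\in D$, and set
\[
\boldsymbol{A}:=\bff(g\star\bx)-\rho_g\bff(\bx)\in\R^p,\qquad \boldsymbol{B}:=\bff(\bx_0)\in\R^p,
\]
so that $q=p$ in the notation of the lemma. Because $\bff$ is a GP, the finite-dimensional vector $(\bff(g\star\bx),\bff(\bx),\bff(\bx_0))$ is jointly Gaussian. The pair $(\boldsymbol{A},\boldsymbol{B})$ is a linear image of this vector, so it is jointly Gaussian as well. The covariance of $\boldsymbol{B}$ is $\Sigma_{\boldsymbol{B}}=K(\bx_0,\bx_0)$, whose rank is exactly the $r$ in the statement, and the pseudo-inverse $\Sigma_{\boldsymbol{B}}^\dag$ coincides with $K(\bx_0,\bx_0)^\dag$. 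The linear independence hypothesis of Theorem~\ref{thm:finite response equiv} is therefore literally the second hypothesis of Lemma~\ref{lem2}, with $\boldsymbol{b}_i:=\by_i$.

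Next I check the conditional hypothesis. The assumption $\mathbb{P}(\boldsymbol{A}=\bm 0\mid \boldsymbol{B}=\by_i)=1$ for $i=1,\dots,r+1$ is precisely the third hypothesis of the lemma. I will interpret conditioning on the null event $\{\boldsymbol{B}=\by_i\}$ as the Gaussian conditional law given by the explicit conditioning formulas, as the paper does elsewhere. With that reading, Lemma~\ref{lem2} gives $\mathbb{P}(\boldsymbol{A}=\bm 0)=1$, which is the first claim.

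For the ``Thus'' part, I note that the data $\bx_0$ and $\by_1,\dots,\by_{r+1}$, the rank $r$, and the independence condition do not depend on $(g,\bx)$. Applying the first part for each fixed pair $(g,\bx)$ therefore yields stochastic equivariance for all $g\in G$ and $\bx\in D$, i.e.\ condition (i) of Theorem~\ref{thm:equivariant-conditioning}.

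I expect the main obstacle to be a subtlety that the statement leaves implicit: the theorem does not assume $\bff$ is centred. Lemma~\ref{lem2} accommodates a mean, but the conditional mean formula then involves $\by_i-\mathbb{E}[\boldsymbol{B}]$. The differences $\by_j-\by_{r+1}$ cancel this term, so the argument of Lemma~\ref{lem2} is unaffected. A second point to address is that when $r<p$, the conditioning values should lie in the support $\mathbb{E}[\boldsymbol{B}]+\operatorname{range}K(\bx_0,\bx_0)$ for the conditional law to be meaningful. I will handle this by using the pseudo-inverse formula as the definition of the Gaussian conditional, which is what Lemma~\ref{lem2} already does. Finally, the degenerate case $r=0$ needs a separate remark: there, $\Sigma_{\boldsymbol{B}}^\dag=0$, so the conditional law equals the unconditional law of $\boldsymbol{A}$, and a single response suffices.
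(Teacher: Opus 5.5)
Your proposal is correct and coincides with the paper's proof, which also sets $\boldsymbol{A}=\bff(g\star\bx)-\rho_g\bff(\bx)$ and $\boldsymbol{B}=\bff(\bx_0)$ and invokes Lemma~\ref{lem2}. Your extra remarks are sound and are already implicitly covered by that lemma's proof: the mean of $\boldsymbol{B}$ cancels in the differences, conditioning on null events is read through the pseudo-inverse Gaussian formula, and the case $r=0$ holds vacuously.
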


\begin{proof}
Set $\boldsymbol{A}_{g\bx}=\bff(g\star \bx)-\rho_g\bff(\bx)$ and $\boldsymbol{B}=\bff(\bx_0)$, and apply Lemma~\ref{lem2}.
\end{proof}

\begin{cor}
Let the assumptions of Theorem~\ref{thm:finite response equiv} hold, and let $\boldsymbol{\varepsilon}_0\sim \mathcal N(\bm 0,\sigma^2 I_p)$ be independent of $\bff$, with $\sigma^2>0$. If there exists $\bx_0 \in D$ and $\by_1,\dots,\by_{p+1} \in \R^{p}$ such that
$$\left\{
\begin{array}{l}
\by_1-\by_{p+1},\dots,\by_p-\by_{p+1} \text{ linearly independent}, \\
\text{and } \mathbb{P}\bigl(\bff(g\star \bx)-\rho_g\bff(\bx)=\bm 0 \mid \bff(\bx_0)+\boldsymbol{\varepsilon}_0=\by_i\bigr)=1 \text{ for } i=1,\dots,p+1,
\end{array}
\right.
$$then $\mathbb{P}\bigl(\bff(g\star \bx)-\rho_g\bff(\bx)=\bm 0\bigr)=1$.

Thus, if $\mathbb{P}\bigl(\bff(g\star \bx)-\rho_g\bff(\bx)=\bm 0 \mid \bff(\bx_0)+\boldsymbol{\varepsilon}_0=\by_i\bigr)=1$ for any $g \in G$, $\bx\in D$, and for $i=1,\dots,p+1$, then $\mathbb{P}\bigl(\bff(g\star \bx)-\rho_g\bff(\bx)=\bm 0\bigr)=1$ for any $g \in G$, $\bx\in D$.
\end{cor}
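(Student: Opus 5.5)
The corollary follows from Lemma~\ref{lem1}, applied with $\boldsymbol{A}=\boldsymbol{A}_{g\bx}:=\bff(g\star\bx)-\rho_g\bff(\bx)$ and $\boldsymbol{B}:=\bff(\bx_0)+\boldsymbol{\varepsilon}_0$, with $q=p$. The noise is what makes $\Sigma_{\boldsymbol{B}}$ full rank, so we can use the invertible version instead of the pseudo-inverse version behind Theorem~\ref{thm:finite response equiv}.

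First, fix $g\in G$ and $\bx\in D$ and check the hypotheses of Lemma~\ref{lem1}.

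\emph{Joint Gaussianity.} $\bff$ is a GP, so $(\bff(g\star\bx),\bff(\bx),\bff(\bx_0))$ is jointly Gaussian. Since $\boldsymbol{\varepsilon}_0$ is Gaussian and independent of $\bff$, the vector $(\bff(g\star\bx),\bff(\bx),\bff(\bx_0),\boldsymbol{\varepsilon}_0)$ is jointly Gaussian. Then $(\boldsymbol{A},\boldsymbol{B})$ is a linear image of it and hence jointly Gaussian.

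\emph{Full rank.} By independence,
\begin{displaymath}\Sigma_{\boldsymbol{B}}=K(\bx_0,\bx_0)+\sigma^2 I_p.\end{displaymath}
This is positive definite because $K(\bx_0,\bx_0)\succeq 0$ and $\sigma^2>0$, so it has full rank $p$.

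\emph{Affine independence and conditioning.} Linear independence of $\by_1-\by_{p+1},\dots,\by_p-\by_{p+1}$ is assumed directly. The conditioning hypothesis is exactly $\mathbb{P}(\boldsymbol{A}=\bm 0\mid \boldsymbol{B}=\by_i)=1$ for $i=1,\dots,p+1$.

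Lemma~\ref{lem1} then gives $\mathbb{P}(\boldsymbol{A}_{g\bx}=\bm 0)=1$. This is the first claim.

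For the ``Thus'' part, apply the first claim separately to each pair $(g,\bx)$. The same $\bx_0$ and $\by_1,\dots,\by_{p+1}$ work for all pairs, since neither depends on $(g,\bx)$. This gives stochastic equivariance for every $g\in G$ and $\bx\in D$.

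The one delicate point is what conditioning on a null event such as $\{\boldsymbol{B}=\by_i\}$ means. I would read it, as in Lemma~\ref{lem1}, through the Gaussian regular conditional distribution given by the explicit conditioning formulas. Because $\Sigma_{\boldsymbol{B}}$ is invertible, these formulas are defined for every $\by_i$, which makes this reading unambiguous and strictly easier than the degenerate case handled by Lemma~\ref{lem2}. The mean of $\bff$ (centred or not) plays no role, since Lemma~\ref{lem1} already allows a nonzero mean.
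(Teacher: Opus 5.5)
Your proposal is correct and matches the paper's proof. The paper also sets $\boldsymbol{A}=\bff(g\star\bx)-\rho_g\bff(\bx)$ and $\boldsymbol{B}=\bff(\bx_0)+\boldsymbol{\varepsilon}_0$, notes joint Gaussianity and that $\operatorname{Cov}(\boldsymbol{B})=K(\bx_0,\bx_0)+\sigma^2 I_p$ has rank $p$, and invokes Lemma~\ref{lem1} with $q=p$; your added details (joint Gaussianity via independence, applying the first claim to each pair $(g,\bx)$ for the ``Thus'' part, and reading conditioning on null events through the Gaussian conditioning formulas) make explicit what the paper leaves implicit.
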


\begin{proof}
Set $\boldsymbol{A}_{g\bx}=\bff(g\star \bx)-\rho_g\bff(\bx)$ and $\boldsymbol{B}=\bff(\bx_0)+\boldsymbol{\varepsilon}_0$. Then $(\boldsymbol{A}_{g\bx},\boldsymbol{B})$ is jointly Gaussian and
\begin{equation}
\operatorname{Cov}(\boldsymbol{B})=K(\bx_0,\bx_0)+\sigma^2 I_p.
\end{equation}Since $\sigma^2>0$, the matrix $K(\bx_0,\bx_0)+\sigma^2 I_p$ has rank $p$. The claim therefore follows from Lemma \ref{lem1}.
\end{proof}

\subsection{Consequences for sparse and scalable GP inference}
\label{subsec:consequences-sparse}
Theorem~\ref{thm:equivariant-conditioning} allows a unified assessment of equivariance for scalable GP methods. If a method produces posterior predictions by conditioning a stochastically equivariant GP, then the resulting posterior predictive distributions are stochastically equivariant. In particular, we will demonstrate that equivariance is a property of the resulting conditional distribution (and thus of the kernel) and does not depend on how the conditioning set (subset points, inducing points, ELBO, etc.) was selected. Throughout, assume a centred $\mathbb{R}^p$-valued GP $\myf\sim\mathrm{GP}(\bm 0,K)$ on $D$ and a linear group action $\star$ on $D$ with representation $\rho$ on $\mathbb{R}^p$. 

\subsubsection{Subset-of-data and Nyström-type approximations}
\label{subsec:conseq-sod-nystrom}
SoD performs exact GP regression on a subset $X_s\subset X$ with $|X_s|=m$ and the corresponding observation subvector $\by_s$.
Equivalently, it uses the conditional GP $\myf\,|\,\by_s$ and predicts with its posterior distribution.
Therefore, if the prior GP $\myf$ is stochastically equivariant, then the SoD posterior process is stochastically equivariant by
Theorem~\ref{thm:equivariant-conditioning}.
Nyström approximations replace $K(X,X)$ by $\tilde K(X,X)=K(X,Z)K(Z,Z)^{-1}K(Z,X)$ for some $Z\subset D$ with $|Z|=m$.
This can be viewed as defining an {approximate kernel} $\tilde K$ and hence an exact GP model $\tilde{\myf}\sim\mathrm{GP}(\bm 0,\tilde K)$.
Posterior predictions are then those of exact GP regression under the kernel $\tilde K$. Consequently, Nyström posterior distributions are stochastically equivariant whenever the approximate kernel $\tilde K$ satisfies the same kernel equivariance condition \eqref{eq:kernel-equivariance} (with $K$ replaced by $\tilde K$). In particular, equivariance holds if $\tilde K$ is constructed from an equivariant base kernel in a way that preserves \eqref{eq:kernel-equivariance}. In the next subsection we see that this is equivalent to $K$ satisfying \eqref{eq:kernel-equivariance}.

\subsubsection{Inducing-point prior approximations} 
\label{subsec:conseq-prior-inducing}
These methods use inducing locations $Z$, the corresponding inducing vector $\boldsymbol{u}$, and the Nyström term $Q(\bx,\bx'):=K(\bx,Z)K(Z,Z)^{-1}K(Z,\bx').$ It is straightforward to see that $Q$ is an equivariant kernel in the sense of \eqref{eq:kernel-equivariance} if and only if $K$ is equivariant as for any $g,h, \bx, \bx'$ it holds \begin{displaymath}K(g\star \bx,Z)=\rho_gK(\bx,Z), \qquad K(Z,h\star \bx')=K(Z,\bx')\rho_h^\top.\end{displaymath}
Both SoR and DTC can be interpreted as exact GP regression under an approximate prior covariance obtained by replacing parts of $K$ with $Q$.
SoR performs exact conditioning under the GP $\tilde{\myf}\sim\mathrm{GP}(\bm 0,Q).$ 
Hence, by Theorem~\ref{thm:equivariant-conditioning}, the SoR posterior distribution is stochastically equivariant whenever the
exact prior kernel $K$ is equivariant. 
DTC uses the same $Q(X,X)$ and $Q(\bx_\ast,X)$ as SoR in the data-fit and cross-covariance, but retains the exact test prior
$K(\bx_\ast,\bx_\ast)$, yielding a posterior distribution \begin{math}
    p\bigl(\tilde{\myf}(\bx_\ast)\mid \by\bigr)=
\mathcal{N}\!\bigl(Q(\bx_\ast,X)(Q(X,X)+\sigma^2 I)^{-1}\by,\; K(\bx_\ast,\bx_\ast)-Q(\bx_\ast,X)(Q(X,X)+\sigma^2I)^{-1}Q(X,\bx_\ast)\bigr).
\end{math}
Analogously the DTC covariance blocks transform equivariantly, and the DTC posterior predictive distribution is stochastically
equivariant if and only if $K$ satisfies \eqref{eq:kernel-equivariance}.

\paragraph{FITC and PITC}
FITC and PITC modify the Nyström covariance $Q$ by re-inserting the missing marginal (or block-marginal)
variances of the exact kernel $K$. In FITC, one replaces the exact conditional covariance
$K(X,X)-Q(X,X)$ by its (block-)diagonal part, yielding the {FITC condition}
\begin{displaymath}\myf(\bx_i)\perp \myf(\bx_j)\mid \boldsymbol{u} \qquad (i\neq j), \quad\text{equivalently}\quad \mathrm{Cov}(\myf(X)\mid \boldsymbol{u})\approx \operatorname{diag}(K(X,X)-Q(X,X)),\end{displaymath}
where in the vector-valued case $\operatorname{diag}(\cdot)$ denotes $n$ blocks of size $p\times p$.
This can be encoded by the diagonal-corrected Nyström kernel
\begin{displaymath}K_{\mathrm{FITC}}(\bx,\bx') :=Q(\bx,\bx')+\mathbf{1}\{\bx=\bx'\}\big(K(\bx,\bx)-Q(\bx,\bx)\big),\end{displaymath}
On a training set $X$ this satisfies
$K_{\mathrm{FITC}}(X,X)=Q(X,X)+\operatorname{diag}(K(X,X)-Q(X,X))$.
Hence FITC is exact GP regression under the prior $\tilde{\myf}\sim\mathrm{GP}(\bm 0,K_{\mathrm{FITC}}).$ 
Again, if $K_{\mathrm{FITC}}$ is built upon an equivariant kernel $K$ satisfying \eqref{eq:kernel-equivariance}, it follows that $K_{\mathrm{FITC}}$ satisfies \eqref{eq:kernel-equivariance} as well, yielding stochastically equivariant prior and posterior distribtuion by Theorem~\ref{thm:equivariant-conditioning}. PITC is analogous, with the FITC block-diagonal correction replaced by a block-diagonal correction associated with a partition $X=\bigsqcup_b X_b$, corresponding to a {partial} conditional independence assumption within blocks. The resulting model can be written with an analogous block-corrected Nyström kernel obtained by adding $\mathbf{1}\{\bx,\bx'\text{ in same block}\}\big(K(\bx,\bx')-Q(\bx,\bx')\big)$ to $Q(\bx,\bx')$.

\subsubsection{Variational inducing-point method}
\label{subsec:conseq-vfe-svgp}
Variational inducing-point methods approximate the exact posterior process by a tractable Gaussian process obtained from a variational marginal over inducing variables. Fix inducing inputs $Z=\left(\bz_1,\dots,\bz_m\right)$ and the corresponding inducing vector $\boldsymbol{u}$, and consider the
variational family \begin{displaymath}q(\myf)=\int p(\myf\mid \boldsymbol{u})\,q(\boldsymbol{u})\,d\boldsymbol{u},\qquad q(\boldsymbol{u})=\mathcal N(\bmm,S).\end{displaymath}
Here $p(\myf\mid \boldsymbol{u})$ is the exact conditional GP under the prior $\myf\sim\mathrm{GP}(\bm 0,K)$, and the approximation enters
only through $q(\boldsymbol{u})$. For any $\bx\in D$, the conditional distribution is
\begin{displaymath}p\!\left(\myf(\bx)\mid \boldsymbol{u}\right)= \mathcal N\!\big(K(\bx,Z)K(Z,Z)^{-1}\boldsymbol{u},\;K(\bx,\bx)-Q(\bx,\bx)\big).\end{displaymath}
Integrating $\boldsymbol{u}\sim q(\boldsymbol{u})$ yields an induced Gaussian process $q(\myf)=\mathrm{GP}(\bmu_q,K_q)$ with
\begin{displaymath}\bmu_q(\bx)=K(\bx,Z)K(Z,Z)^{-1}\bmm,\end{displaymath}
and
\begin{displaymath}K_q(\bx,\bx')= K(\bx,\bx')+K(\bx,Z)K(Z,Z)^{-1}\big(S-K(Z,Z)\big)K(Z,Z)^{-1}K(Z,\bx').\end{displaymath}
With the prior kernel satisfying  \eqref{eq:kernel-equivariance},
and the Nyström term $Q$ thus inheriting the same equivariance property, the above expressions conjugate as
\begin{displaymath}\bmu_q(g\star \bx)=\rho_g\,\bmu_q(\bx), \qquad K_q(g\star \bx,h\star \bx')=\rho_g\,K_q(\bx,\bx')\,\rho_h^\top.\end{displaymath}
since the middle factor $K(Z,Z)^{-1}(S-K(Z,Z))K(Z,Z)^{-1}$ is independent of $\bx,\bx'.$ Thus for an equivariant prior kernel $K$, the variational posterior process $q(\myf)$ is stochastically equivariant, and so are its predictive marginals.

VFE and SVGP differ in how $(\bmm,S)$ are optimized, 
but they share the same variational family and induced posterior process. Hence both methods preserve stochastic equivariance under the above condition.

\subsubsection{Fourier features and spectral (inter-domain) approximations}
\label{subsec:conseq-rff}
Random-feature and spectral (inter-domain) approximations replace the matrix-valued kernel $K$ by a finite-dimensional
feature surrogate.
In the vector-valued setting, a convenient parametrization is
\begin{displaymath}\tilde K(\bx,\bx')=\Phi(\bx)\,\Phi(\bx')^\top, \qquad \Phi: D\to\mathbb R^{p\times M},\end{displaymath}
which corresponds to the Bayesian linear model
\begin{displaymath}\tilde{\myf}(\bx)=\Phi(\bx)\,\boldsymbol{w},\qquad \boldsymbol{w}\sim\mathcal N(\bm 0,I_M).\end{displaymath}
Hence $\tilde{\myf}\sim\mathrm{GP}(\bm 0,\tilde K)$ is an {exact} Gaussian process, and posterior predictions are obtained by standard GP conditioning with $\tilde K$. By Theorem~\ref{thm:equivariant-conditioning}, the posterior is stochastically equivariant whenever the approximate kernel $\tilde K$ satisfies the equivariance property \eqref{eq:kernel-equivariance}. A sufficient condition is that the feature map $\Phi$ satisfies
\begin{displaymath}\Phi(g\star \bx)=\rho_g\,\Phi(\bx)\qquad \forall g\in G,\ \bx\in D.\end{displaymath}
Indeed, then for all $g,h\in G$,
\begin{displaymath}\tilde K(g\star \bx,h\star \bx') =\Phi(g\star \bx)\Phi(h\star \bx')^\top =\rho_g\,\Phi(\bx)\Phi(\bx')^\top\,\rho_h^\top =\rho_g\,\tilde K(\bx,\bx')\,\rho_h^\top,\end{displaymath}
so $\tilde{\myf}$ has a stochastically equivariant prior and therefore a stochastically equivariant posterior by
Theorem~\ref{thm:equivariant-conditioning}.

\subsubsection{Structured kernel interpolation and matrix-free solvers}
\label{subsec:conseq-ski-pcg}

SKI \cite{Wilson2015KernelIF} and PCG \cite{pcg} mainly exploit computationally convenient structure of the training covariance $K(X,X)$ and GPU-parallelized fast solves of $(K(X,X)+\sigma^2I)\boldsymbol{\alpha}=\by$ to approximate the posterior mean by
\begin{equation*}\boldsymbol{\mu}(\bx_\ast)=K(\bx_\ast,X)\boldsymbol{\alpha},\end{equation*}
resulting in an equivaraint posterior mean whenever $K$ satsifies \eqref{eq:kernel-equivariance}.
With the posterior covariance equivalent to the full GP covariance, the SKI and PCG GPs retain equivariant posterior distributions.

\subsection{Example: SO(2)-Equivariant Sparse Variational GP}
\label{subsec:so2-svgp-example}
This section provides an end-to-end example of stochastically equivariant sparse variational inference. We consider an $\mathbb{R}^2$-valued Gaussian process $\myf=(\myf(\bx))_{\bx\in D}$, with group $G=\mathrm{SO}(2)$
acting on inputs by $\bx\mapsto R\bx$ and on outputs by the standard representation $\rho_R=R$ (i.e.\ vectors rotate as vectors).
The prior is $\myf\sim\mathrm{GP}(\bm 0,K_\Pi)$, where $K_\Pi$ is the  fundamental-region kernel \eqref{eq:fundamental region kernel} built from a projection $\Pi_s(\bx)$ of each input onto a fundamental region $A$ of the action and a section $s(\bx)\in \mathrm{SO}(2)$ that rotates $\bx$ onto $A.$ For $\mathrm{SO}(2)$ on $\mathbb{R}^2$, a canonical fundamental region is the positive $x$-axis
\begin{displaymath}A=\{(r,0): r>0\},\end{displaymath}
so that every $\bx\neq \bm 0$ is mapped to $\Pi_s(\bx)=(\|\bx\|,0)$ by rotating $\bx$ by its polar angle (and we fix $s(\bm 0)=I$).
A key operational consequence is that conditioning the full field on values along $A$ already pins down the entire field. 
Under perfect equivariance of $\myf$, this motivates placing inducing inputs on $A$, because they form a sufficient (in the Titsias sense) set for reconstructing the structure of $\myf$ over the whole orbit space. We hence place inducing inputs on $A$:
\begin{displaymath}Z=\{(r_j,0)\}_{j=1}^m,\qquad r_j\in(0,r_{\max}],\end{displaymath}
using a uniform grid in radius ($r_j$ linearly spaced between $0.01$ and $r_{\max}=\max_i\|\bx_i\|$).
In the experiments below, we use $m \approx 1\%$ of the training set size, i.e.\ aggressive sparsification.
\subsubsection{Evaluation metrics considered}
\label{subsec:metrics}
To assess predictive performance of our GPs, we report both a pointwise error metric and a distributional scoring rule, reflecting the dual role of Gaussian processes as predictors and probabilistic models.
\paragraph{Root mean squared error (RMSE)}
Let $\{(\boldsymbol{x}_i^\ast,\boldsymbol{y}_i^\ast)\}_{i=1}^{n'}$ denote a test set, and let
$\boldsymbol{m}_i = \mathbb{E}[\myf(\boldsymbol{x}_i^\ast)\mid \by] \in \mathbb{R}^p$
be the posterior mean prediction.
The RMSE is defined as
\begin{equation}\label{eq:rmse} \mathrm{RMSE} =\sqrt{ \frac{1}{n'} \sum_{i=1}^{n'} \left\| \boldsymbol{y}_i^\ast - \boldsymbol{m}_i \right\|_2^2 }.\end{equation}
RMSE measures the average Euclidean deviation between posterior mean predictions and observed test targets.
It therefore assesses the quality of point predictions and is insensitive to the posterior uncertainty.

\paragraph{Log score (LogS)}
To assess the quality of the full predictive distribution, we additionally report the log score.
Let
\begin{equation*}\boldsymbol{y}_\ast = \bigl( \boldsymbol{y}_1^\ast;\dots;\boldsymbol{y}_{n'}^\ast \bigr) \in \mathbb{R}^{p n'}\end{equation*}
denote the joint test output vector, and let
\begin{equation*}p(\boldsymbol{y}_\ast \mid \by) = \mathcal{N}(\boldsymbol{m}_\ast,\Sigma_\ast)\end{equation*}
be the joint GP posterior predictive distribution at the test inputs.
The LogS is defined as the negative twice log conditional density evaluated at the observed test outcome,
\begin{equation}\label{eq:logs} \mathrm{LogS} = -2\log p(\boldsymbol{y}_\ast \mid \by)\end{equation}
Unlike the RMSE, which evaluates point predictions through the posterior mean, the LogS assesses the entire joint posterior distribution.
It measures how concentrated the conditional density is around the realized, unseen test outcome and therefore depends on both the posterior mean and covariance, including correlations across output components and test locations.
Low LogS values indicate predictive distributions that are simultaneously accurate and well calibrated.
\subsubsection{Experiment 1}
\label{subsubsec:so2-exp1}
We sample $n=10000$ inputs uniformly on the square $[-100,100]^2$ and simulate a single draw from the equivariant GP prior
$\myf\sim\mathrm{GP}(\bm 0,K_\Pi)$ by forming the full covariance matrix $K_\Pi(X,X)\in\mathbb{R}^{2n\times 2n}$ and multiplying a
standard Gaussian vector by a matrix square root obtained from a singular value decomposition. The kernel $K_\Pi$ is built from the base kernel
\begin{equation*}
    K_{\bar{A}}(\bx,\bx',\btheta)=\begin{pmatrix}
        \sigma_1\exp{\left(-\frac{\|\bx-\bx'\|^2}{2\ell_1^2}\right)} & 0 \\
        0& \sigma_2\exp{\left(-\frac{\|\bx-\bx'\|^2}{2\ell_2^2}\right)}
    \end{pmatrix},
\end{equation*}
with $\btheta=(\sigma_1,\sigma_2,\ell_1,\ell_2)$ sampled uniformly from $[0.01,20]^4.$ The realized field $\by\in\mathbb{R}^{2n}$
is treated as noise-free ground truth. A random subset of $40\%$ of points is used as the training set and the remainder as the test set. Training a full vector-valued GP on the resulting $4000$ training points is already computationally demanding.

We compare:
\begin{enumerate}
    \item a full GP trained by maximum likelihood with Adam for $100$ iterations,
    \item the SVGP trained for $20$ epochs with minibatches of size $B=0.05\,n_{\mathrm{train}}$, using $5$ natural-gradient steps per epoch and $100$ Adam steps on hyperparameters per epoch,
    \item 
    point predictions by the difference of the GP posterior means to the ground truth, 
    \item a measure of uncertainty at the test points, given at each test point $\bx_\ast$ by
    \begin{math}
        u(\bx_\ast)=\tfrac12\log\det\Sigma(\bx_\ast),
    \end{math} where $\Sigma(\bx_\ast)$ is the posterior covariance matrix, and
    \item RMSE, Log scores and the estimated hyperparameters $\btheta.$
\end{enumerate}

The initial values of $\btheta$ used for the optimization are sampled in the smaller hypercube $[0.5,2]^4$ and are the same for both SVGP and the full GP. Figure~\ref{fig:teaser} in the introduction (using a different seed and comparing only the posterior means (also, see Figure~\ref{fig:extra prior samples} in the appendix for more prior samples)), and Figure~\ref{fig:so2-diff-10000} show that the SVGP posterior distribution preserves the global structure of the full GP posterior mean and covariance despite extreme sparsification. Quantitatively, the full GP achieves essentially perfect reconstruction in terms of RMSE on this synthetic draw, whereas the SVGP is less accurate but still competitive in absolute terms ($1.23\times10^{-5}$ vs. $5.49\times10^{-4}$). The probabilistic gap is stronger, as commonly observed for SVGP. The full GP log score is substantially better than the SVGP log score ($-23.49$ vs. $-10.31$), reflecting that the SVGP covariance is only an approximation. The wall-clock times highlight the computational motivation, with the full GP requiring $2720.5$s and the SVGP $111.6$s for training and inference, which corresponds to about a $24$x speedup in this run. This computational advantage allows for more thorough optimization of the kernel parameter vector $\btheta$. We observe that the optimized values $\btheta_{\mathrm{SVGP}}=(3.99,9.11,9.89,10.03)$ are much closer to the actual GP parameters $\btheta=(3.77,9.22,9.92,9.77)$ than the full GP optimized parameters $\btheta_{\mathrm{Full}}=(1.72,1.45,2.15,2.20)$. The latter would eventually move closer to the true parameters, but only at a higher computational cost.

\begin{figure}
\centering
\includegraphics[width=\linewidth]{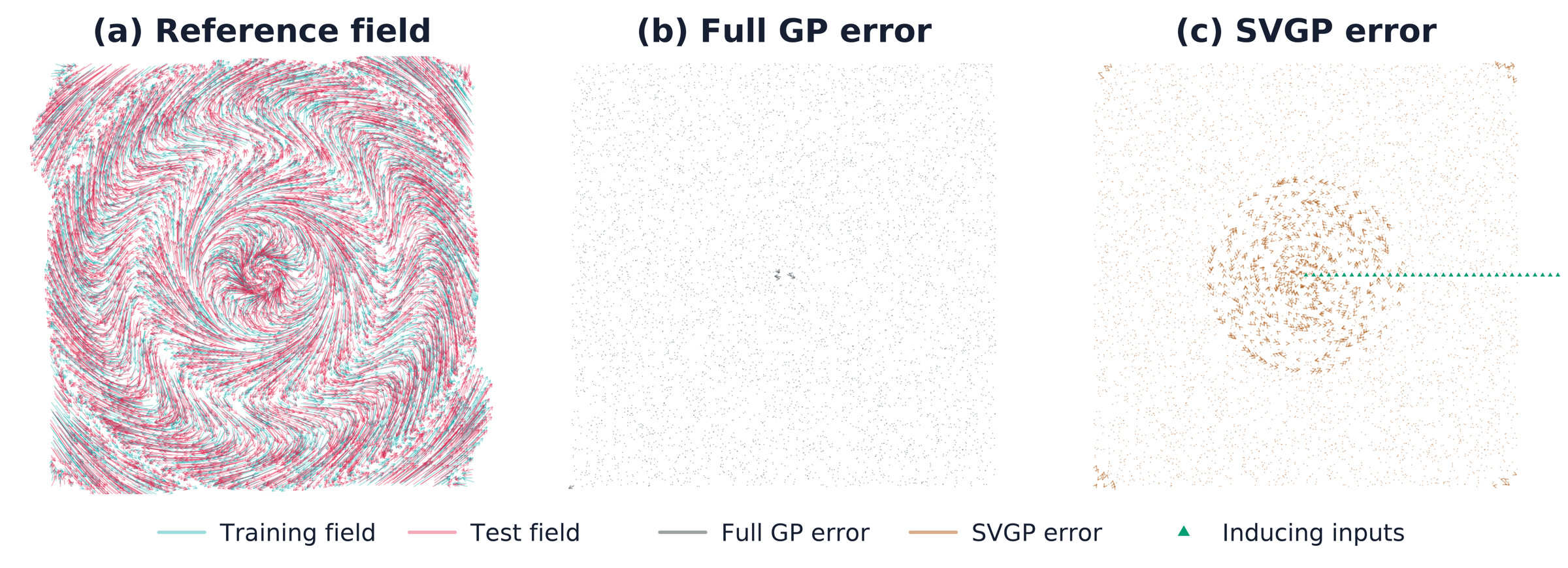}
\par
\noindent\makebox[\linewidth][l]{\includegraphics[width=.92\linewidth]{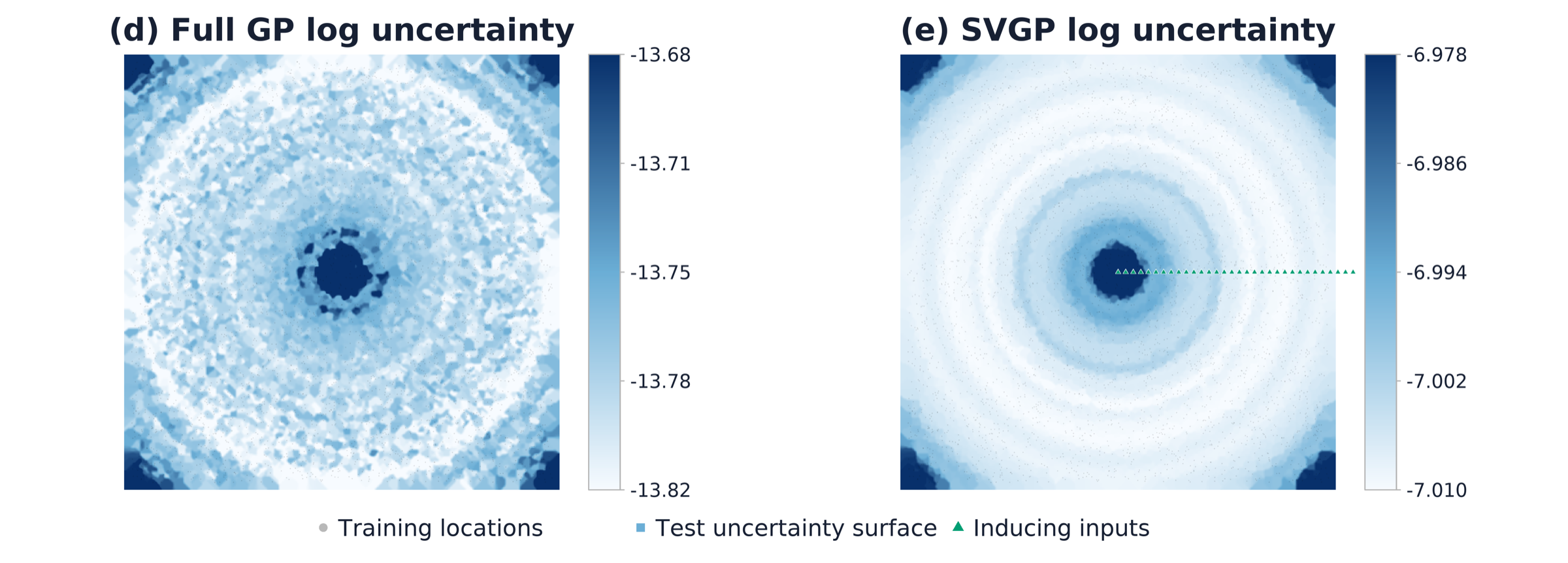}}

\caption{Synthetic $\mathrm{SO}(2)$ experiment with $n=10000$. Top row: realized equivariant field, full GP posterior mean error with respect to the realization, and SVGP posterior mean error. Bottom row: predictive uncertainty of the full GP and SVGP models.} \Description{A multi-panel synthetic SO(2) experiment. The top row shows the sampled equivariant field and posterior mean errors for the full GP and the SVGP. The bottom row shows predictive uncertainty maps for the two methods.} \label{fig:so2-diff-10000} \end{figure}

\subsubsection{Experiment 2
}
\label{subsubsec:so2-exp2}
The second experiment is designed to directly test stochastic equivariance of the posterior distributions and to compare uncertainty structure between full GP and SVGP in a controlled geometry.
We follow the same framework as in Experiment~\ref{subsubsec:so2-exp1} and sample the ground truth from the SO(2)-equivariant GP at $n=6000$ inputs drawn uniformly on the same domain. We define a wedge sector $S$ of opening angle $\pi/6$ in the first quadrant, consisting of points satisfying $\bx^{(1)}\ge 0$, $\bx^{(2)}\ge 0$, with polar angle in $[0,\pi/6]$. We then rotate this wedge by $R_{\pi/3}$ to create a rotated copy $S'=R_{\pi/3}S$. Let $X_S\subset S$ denote the training inputs sampled inside the wedge, and let $\by$ denote the corresponding observed values. The evaluation set consists of additional holdout points inside $S$ whose rotated counterparts lie in $S'$. Posterior equivariance then demands that, for each evaluation location $\bx\in S$,
\begin{align*}
\myf(R\bx)\mid \by&\overset{d}{=}
R\,\myf(\bx)\mid \by,
\qquad R=R_{\pi/3}, \qquad \text{and thus}\\
\mathbb{E}\big[\myf(R\bx)\mid \by\big]
&=R\,\mathbb{E}\big[\myf(\bx)\mid \by\big].
\end{align*}

\begin{figure}
    \centering
    \includegraphics[width=\linewidth]{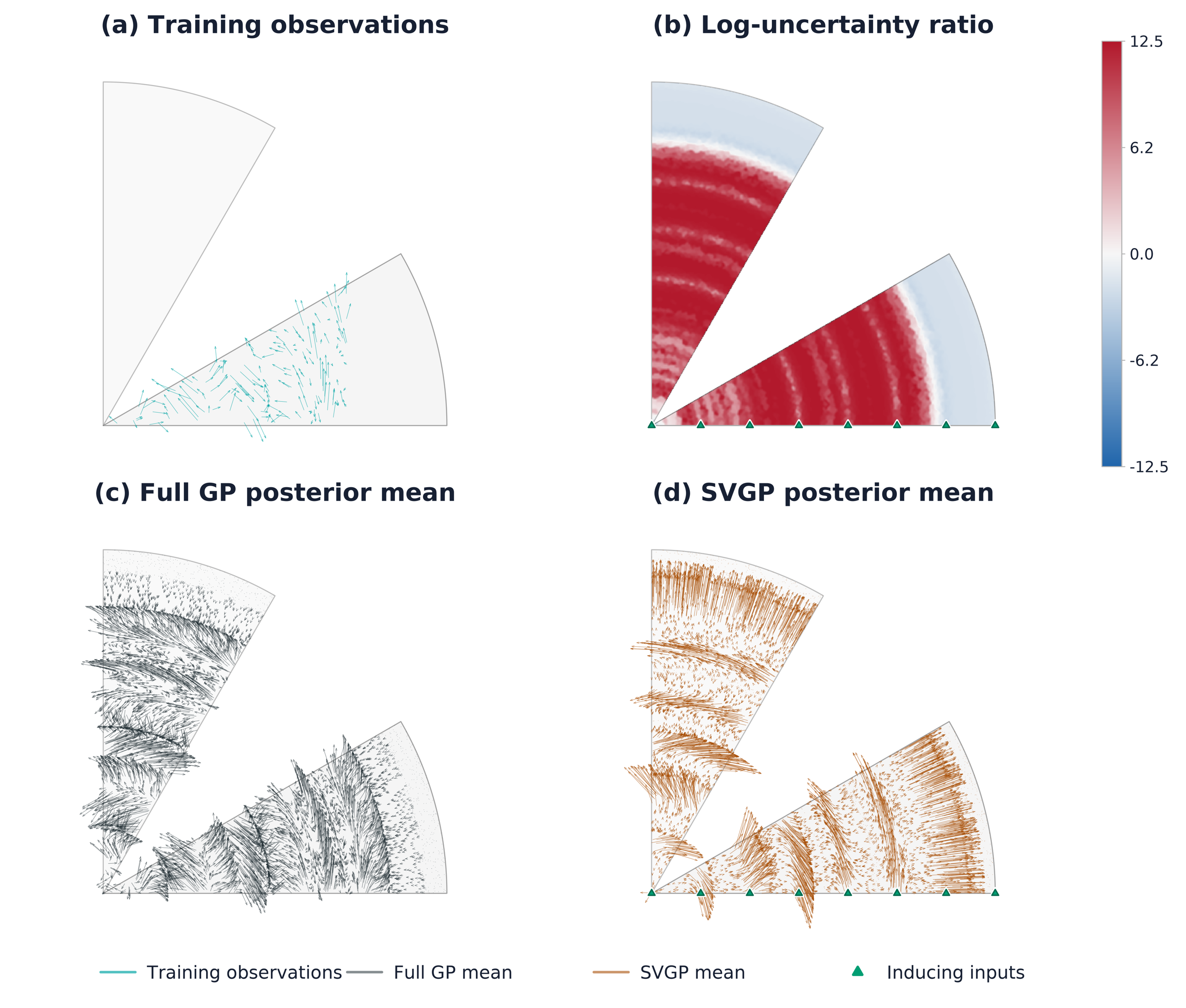}
    \caption{Wedge equivariance check. Top left: training observations in $S$ and rotated copy $S'$. Top right: visualizes $\tfrac12\log\det(\Sigma_{\mathrm{SVGP}})-\tfrac12\log\det(\Sigma_{\mathrm{Full}})$ on test points in $S$ and $S'$.
    Bottom: full GP posterior mean on $S$ and $S'$ (left)
    and SVGP posterior mean on $S$ and $S'$ with inducing points in the fundamental region indicated by green triangles.
    }
    \Description{A four-panel wedge experiment for SO(2) equivariance. The figure shows training observations on a wedge and its rotated copy, a map of uncertainty differences between SVGP and full GP, and posterior mean vector fields for both methods on the two wedges.}
    \label{fig:so2-equiv-wedge}
\end{figure}

By Theorem~\ref{thm:equivariant-conditioning}, the full GP posterior is stochastically equivariant because it is a conditional GP derived from an equivariant kernel. The key point of this experiment is to show that the SVGP posterior remains equivariant, with the equivariance encoded in the posterior mean and covariance despite the approximation. We use the same full GP and SVGP training loop as in Experiment~\ref{subsubsec:so2-exp1}, but with 8 fixed inducing points in $A$. Figure~\ref{fig:so2-equiv-wedge} shows posterior mean vectors for both models on $S$ and $S'$. The SVGP posterior mean approximates the full GP posterior mean on both wedges with less structure and, crucially, the rotated wedge mean field is the rotated version of the original wedge field. In this sense, the SVGP retains the equivariance constraint in the posterior mean, although the approximation is visibly imperfect.
To compare distributional behavior, we visualize the log-uncertainty ratio
\begin{displaymath}u_{\mathrm{SVGP}}(\bxast)-u_{\mathrm{Full}}(\bxast) = \tfrac12\log\det\Sigma_{\mathrm{SVGP}}(\bxast)-\tfrac12\log\det\Sigma_{\mathrm{Full}}(\bxast),\end{displaymath}
over the test points $\bxast$ in $S$ and $S'$. In the shown run, the ratio map indicates that the full GP is typically less uncertain (positive values) in regions well-supported by data, while the SVGP shows the classical overconfidence in extrapolation regimes (negative values).
A particularly interpretable pattern is that uncertainty differences vary across orbits as a function of proximity to inducing support. Locations whose orbit representatives project near inducing radii on $A$ exhibit systematically smaller discrepancy. In those regions, the SVGP tracks the full GP uncertainty more closely. The main message of the two experiments is therefore that aggressive sparsification affects distributional calibration and uncertainty, but not the structural constraint itself, which is an important guarantee.

\section{Triply-scalable equivariant GPs for large molecular datasets}
\label{sec:triply-scalable}
We now apply the integration-free equivariant GP constructions recalled above to large-scale molecular regression. Our target is the electric dipole moment, a $\mathbb{R}^3$-valued quantity representing the imbalance of electron distribution of a molecule, which allows quantum chemists to infer about the interaction of different molecules. Precise techniques exist in chemical laboratories to determine the exact dipole moment of a given molecular configuration. Due to cost and time limitations, quantum chemists are interested in using GP regression as an emulator for the dipole moment. It is known that the dipole moment of a molecule transforms equivariantly under rigid rotations and is invariant to translations of the molecular configuration. Previous work \citep{steinertintegration} established the framework for modeling the dipole moment of water molecules with the equivariant kernel in \eqref{eq:fundamental region kernel}. Building upon this approach, we now move from exact equivariant GP modeling of the water dipole moment to equivariant sparse GPs scalable to larger datasets of dipole moments of arbitrary molecules of fixed type. In our application, we consider a dataset of $21000$ N-methylformamide molecules, each consisting of 9 atoms, $\mathrm{C_2H_5NO}$.


This application requires scalability of equivariant GP regression 
in three complementary aspects. First, equivariant kernel evaluation must remain computationally feasible without numerical group integration. Second, the kernel construction must naturally extend to molecules with arbitrary numbers of atoms. Third, inference and hyperparameter learning must scale to large datasets. We refer to the simultaneous satisfaction of these three requirements as triple scalability.

The following subsections proceed as follows. We first specify the equivariance and invariance constraints arising in molecular dipole prediction and instantiate the fundamental-region kernel construction for general molecular configurations. Next, we describe several inducing-point allocation strategies and the resulting SVGP and PCG inference schemes. Finally, we compare these scalable equivariant GP approximations on the N-methylformamide dataset, reporting learning curves evaluated on test sets of size $1000$.
\newcommand{\bt}{\boldsymbol{t}}
\subsection{Fundamental regions for arbitrary numbers of atoms}
\label{subsubsec:fundamental-region-general}

We first describe the construction of a molecule with $n_a\geq 3$ atoms. A molecular configuration is represented by
\begin{equation*}
    \bx=(\ba_1;\ldots;\ba_{n_a})\in(\mathbb{R}^3)^{n_a}=:D,
\end{equation*}
where $\ba_i\in\mathbb{R}^3$ denotes the position of the molecule's $i$th atom in Euclidean space. We assume that all configurations have the same molecular composition and follow a common atom-indexing convention, so that the chemical element associated with each coordinate block is fixed across the data set. For example, a water configuration is written as
\begin{equation*}
    \bx=(\ba_{\mathrm O};\ba_{\mathrm H_1};\ba_{\mathrm H_2}),
\end{equation*}
where the first block represents oxygen and the remaining two blocks represent the hydrogen atoms. Permutations of atoms of the same type are not included in the present group action and would have to be treated separately if required (see Section \ref{subsec:Reynolds}). The rigid-motion group $SE(3)=SO(3)\ltimes\mathbb{R}^3$ acts on $D$ according to
\begin{equation}
    (R,\bt)\star\bx:=(R\ba_1+\bt;\ldots;R\ba_{n_a}+\bt),\qquad R\in SO(3),\quad \bt\in\mathbb{R}^3.
\end{equation}
The dipole moment is regarded as an unknown nonlinear map $\bmu:D\longrightarrow\mathbb{R}^3$. Physically, it is invariant under translations and equivariant under rotations. Writing
\begin{align}
    \bt\star_1\bx&:=(\ba_1+\bt;\ldots;\ba_{n_a}+\bt),\\
    R\star_2\bx&:=(R\ba_1;\ldots;R\ba_{n_a}),
\end{align}
these properties are
\begin{equation}
    \bmu(\bt\star_1\bx)=\bmu(\bx),\qquad \bmu(R\star_2\bx)=R\bmu(\bx).
\end{equation}

After removing translations, the remaining symmetry group is $G:=SO(3)$, acting on the translation-reduced configurations through $\star_2$. The corresponding output representation is the standard representation $\rho:G\longrightarrow O(3)$, with $\rho_R=R$. To remove the translational degree of freedom, we fix an anchor index $\delta\in\{1,\ldots,n_a\}$ and define
\begin{equation}
    \bar\ba_i:=\ba_i-\ba_\delta,\qquad \bar\bx:=(\bar\ba_1;\ldots;\bar\ba_{n_a}),\qquad \bar\ba_\delta=\bm 0.
\end{equation}
The reduced configuration $\bar\bx$ is invariant under $\star_1$, while the $G$ acts according to
\begin{equation}
    R\star_2\bar\bx:=(R\bar\ba_1;\ldots;R\bar\ba_{n_a}),\qquad R\in G.
\end{equation}

Following the construction for the water molecule in \citep{steinertintegration}, we extend the same idea to molecules with arbitrary numbers of atoms by deterministically selecting two distinguished, displacement vectors $\bv_1(\bar\bx)$ and $\bv_2(\bar\bx)$ from the translation-reduced configuration. The associated fundamental region is defined by requiring the first selected vector to point along a fixed reference axis and the second selected vector to lie in the half-plane:
\begin{equation}
    A:=\left\{\bar\bx:\bv_1(\bar\bx)=r\be_2,\ r>0,\quad \bv_2(\bar\bx)=(a,b,0),\ a>0,\ b\in\mathbb{R}\right\}.
\end{equation}
This is a fundamental region for the set of non-collinear molecules in $D,$ while collinear molecules have non-trivial stabilizers. Since $G$ acts on each atom of the molecule with the same rotation, for $\bx\neq\by \in A,$ it can then be seen that there exists no $R\in G$ such that $R\star_2\bx=\by.$ It is straightforward to show that for each non-collinear translation-reduced configuration $\bx$ there exist $R \in G$ such that $\bx\in R\star_2\bar{A}.$ That is because a projection onto $A$ can be constructed explicitly as a product of three elementary rotations. Two rotations align the first vector with the reference axis, and a third rotation fixes the remaining freedom around that axis by placing the second vector in the prescribed half-plane. Applying the resulting rotation to every atom-position block gives a representative $\Pi(\bar\bx):=s(\bar\bx)\star_2\bar\bx\in A$, where $s(\bar\bx)\in G$ is the corresponding section represented by the afore mentioned rotation. As $A$ is a fundamental region, the section and projection satisfy $\Pi(R\star_2\bar\bx)=\Pi(\bar\bx)$ and $\rho_{s(R\star_2\bar\bx)}=\rho_{s(\bx)}\rho_R^\top$. Consequently, any matrix-valued base kernel on the fundamental region can be lifted to a translation-invariant and $G$-equivariant kernel according to
\begin{equation}\label{eq:Kpi} 
    K_\Pi(\bx,\bx')=\rho_{s(\bar\bx)}^\top K_{\bar A}\bigl(\Pi(\bar\bx),\Pi(\bar\bx')\bigr)\rho_{s(\bar\bx')}.
\end{equation}

In our experiments we take as $K_{\bar A}$ the diagonal matrix-valued squared exponential kernel defined for configurations $\bx,\bx' \in \overline{A}:$
\begin{align}\label{eq:KAforNMF}
K_{\bar A}\left(\bx,\bx';\btheta\right)
&:= \sigma^2\exp\!\left(-\frac{\|\bx-\bx'\|_2^2}{2\ell^2}\right)I_3,
\end{align}
with $\btheta=(\ell,\sigma^2)$. We use a single length scale and variance because the three dipole components transform jointly under rotations and we do not assume different marginal variances across components. Using three different lengthscales and variances did not significantly improve predictive performance in the experiments. 

\subsection{Inducing point methods}
\label{subsec:inducing}
The practical performance of SVGP depends strongly on where inducing points are placed. It is typical to learn the inducing locations $Z$ together with the kernel hyperparameters by minimizing the ELBO \eqref{eq:SVGP elbo}. It is a drawback of the fundamental region kernel $K_\Pi$ for this application that it is discontinuous on $D.$ In our experiments we thus fix an inducing budget as a ratio of the training size and compare several allocation strategies under the same training pipeline.

We choose inducing set sizes as a ratio
\begin{displaymath}M=\lfloor rn\rfloor,\qquad r\in\{0.1,0.2,0.3\},\end{displaymath}
corresponding to the two SVGP regimes reported in the learning curves. The inducing allocation methods evaluated in our learning curves are:
\begin{itemize}
\item \textbf{random:} uniform sampling from the training inputs.
\item \textbf{greedy:} maxi-min sequential greedy selection heuristics based on kernel-induced distances.
\item \textbf{DPP (M-DPP):} an MCMC swap sampler targeting a fixed-size determinantal point process objective \citep{Li2016FastDS}.
\item \textbf{kmeans++:} $k$-means++ seeding using distances in the fundamental region \cite{inproceedings}.
\item \textbf{kmeans++ Lloyd:} $k$-means++ initialisation followed by Lloyd refinement \cite{bachem16fast}.
\item \textbf{recursive RLS:} Nyström dictionary construction via approximate ridge leverage scores \citep{NIPS2017_a03fa308}.
\item \textbf{MMD:} refinement by decreasing the empirical maximum mean discrepancy (kernel herding style).
\end{itemize}


\subsubsection{Greedy selection}
\label{subsubsec:greedy}
Greedy selection aims to build an inducing set that {covers} the dataset under a distance adapted to the kernel. In our implementation the distance is derived from the RKHS geometry of the kernel, using a Hilbert--Schmidt norm. Let $K:\mathcal{X}\times\mathcal{X}\to\mathbb{R}^{3\times 3}$ be a positive definite matrix-valued kernel (here, $K=K_{\Pi}$). Let $\mathcal{H}_K$ be the corresponding vector-valued RKHS. For each input $\bx\in\mathcal{X}$, the kernel defines the kernel feature operator $K_{\bx}:\mathbb{R}^3\to\mathcal{H}_K$ by
\begin{displaymath}K_{\bx} \boldsymbol{v} := K(\cdot,\bx)\,\boldsymbol{v}.\end{displaymath}
The reproducing property gives, for all $\boldsymbol{v},\boldsymbol{w}\in\mathbb{R}^3$,
\begin{displaymath}\langle K_{\bx} \boldsymbol{v}, K_{\bx'} \boldsymbol{w}\rangle_{\mathcal{H}_K} = \boldsymbol{v}^\top K(\bx,\bx') \boldsymbol{w}.\end{displaymath}
We define a point-to-point distance by the Hilbert--Schmidt norm of the difference of these operators:
\begin{displaymath}d(\bx,\bx')^2 := \|K_{\bx} - K_{\bx'}\|_{\mathrm{HS}}^2 = \sum_{i=1}^3 \|K_{\bx} \boldsymbol{e}_i - K_{\bx'} \boldsymbol{e}_i\|_{\mathcal{H}_K}^2,\end{displaymath}
where $(\boldsymbol{e}_i)_{i=1}^3$ is the standard basis of $\mathbb{R}^3$.
Expanding and using the reproducing identity yields
\begin{align*}
d(\bx,\bx')^2
&= \sum_{i=1}^3 \Bigl(
\langle K_{\bx} \boldsymbol{e}_i, K_{\bx} \boldsymbol{e}_i\rangle
+
\langle K_{\bx'} \boldsymbol{e}_i, K_{\bx'} \boldsymbol{e}_i\rangle
-
2\langle K_{\bx} \boldsymbol{e}_i, K_{\bx'} \boldsymbol{e}_i\rangle
\Bigr)\\
&= \sum_{i=1}^3 \Bigl(
\boldsymbol{e}_i^\top K(\bx,\bx) \boldsymbol{e}_i
+
\boldsymbol{e}_i^\top K(\bx',\bx') \boldsymbol{e}_i
-
2 \boldsymbol{e}_i^\top K(\bx,\bx') \boldsymbol{e}_i
\Bigr)\\
&= \mathrm{tr}\bigl(K(\bx,\bx)\bigr) + \mathrm{tr}\bigl(K(\bx',\bx')\bigr) - 2\,\mathrm{tr}\bigl(K(\bx,\bx')\bigr).
\end{align*}
This is the distance used in the greedy procedure. We start from a uniformly sampled seed set containing 5\% of the training set, estimate kernel hyperparameters on that subset, and then add points whose distance from the current set exceeds an adaptive threshold based on a high quantile of within-set distances (here the 0.9 quantile). See Algorithm~\ref{alg:greedy} for the exact procedure.

\begin{algorithm}
\caption{Greedy inducing point selection}
\label{alg:greedy}
\begin{algorithmic}[1]
\State \textbf{Input:} training inputs $X$, target inducing size $M$
\State \textbf{Initialise:} sample a seed set $X_m \subset X$ with $m_0 = 0.05n$
\State Optimise kernel hyperparameters $\btheta$ on $X_m$ by maximising the ELBO (Adam initialisation as $\btheta_0=(1,1,0.1)$)
\State Compute pairwise kernel-induced distances on $X_m$ using
\begin{displaymath}d(\bx,\bx')^2 = \mathrm{tr}\bigl(K(\bx,\bx)\bigr) + \mathrm{tr}\bigl(K(\bx',\bx')\bigr) - 2\,\mathrm{tr}\bigl(K(\bx,\bx')\bigr),\end{displaymath}
and store the within-set distance matrix $D_m$
\While{$|X_m| < M$}
    \For{$\bx \in X \setminus X_m$}
        \State $d_{\min}(\bx) \gets \min_{\bz\in X_m} d(\bx,\bz)$
        \If{$d_{\min}(\bx) > q_{0.9}(D_m)$}
            \State $X_m \gets X_m \cup \{\bx\}$
            \State Recompute/update $D_m$
        \EndIf
    \EndFor
\EndWhile
\State \textbf{Output:} inducing set $X_m$
\end{algorithmic}
\Description{Pseudocode for greedy inducing-point selection. Starting from a random seed set, the algorithm fits kernel hyperparameters and repeatedly adds training inputs that exceed an adaptive kernel-distance threshold until the target size is reached.}
\end{algorithm}


\subsubsection{M-DPP}
\label{subsubsec:mdpp}
Determinantal point process (DPP) objectives encourage diverse subsets by favouring sets with large Gram determinant. The M-DPP sampler used here \eqref{alg:mdpp} is an MCMC swap chain that maintains a fixed-size subset and accepts swaps according to the determinant ratio \citep{Li2016FastDS}. In practice this yields strong diversity at a moderate computational cost for the inducing sizes considered.

\begin{algorithm}
\caption{M-DPP inducing point selection \citep{Li2016FastDS}}
\label{alg:mdpp}
\begin{algorithmic}[1]
\State \textbf{Input:} training set $X$, target inducing size $M$, iterations $T$
\State Initialise $X_m \subset X$ with $M$ random points
\State Optimise kernel hyperparameters $\btheta$ by Adam on the ELBO (as in SVGP training)
\For{$t = 1\colon T$}
    \State Sample $\bx_{1} \in X \setminus X_m$, $\bx_{0} \in X_m$
    \State $Z_1 \gets (X_m \setminus \{\bx_{0}\}) \cup \{\bx_{1}\}$, \quad $Z_0 \gets X_m$
    \State Compute $\det_1 = \det(K(Z_1,Z_1))$, \quad $\det_0 = \det(K(Z_0,Z_0))$
    \State $p \gets \min\{1, \det_1 / \det_0\}$
    \If{Bernoulli$(p)=1$}
        \State $X_m \gets Z_1$
    \EndIf
\EndFor
\State \textbf{Output:} $X_m$
\end{algorithmic}
\Description{Pseudocode for fixed-size determinantal point process sampling. At each iteration, the method proposes replacing one selected input and accepts the swap according to the ratio of kernel Gram determinants.}
\end{algorithm}


\subsubsection{Recursive RLS Nyström}
\label{subsubsec:rls}
Recursive ridge leverage score (RLS) sampling \cite{NIPS2017_a03fa308} constructs a Nyström dictionary by approximating ridge leverage scores that quantify the diagonal of the residual kernel $K(X,X)$ relative to its current Nyström approximation, and sampling candidate inducing points proportionally to these scores. Intuitively, a large score indicates that the current Nyström approximation explains the kernel variance at this point poorly, so including it in the inducing set is expected to yield the greatest improvement. The recursive formulation enables these scores to be estimated on progressively smaller reference sets, making the approach computationally attractive for large $n$.

\begin{algorithm}
\caption{Recursive RLS Nyström inducing point selection}
\label{alg:rls}
\begin{algorithmic}[1]
\State \textbf{Input:} training set $X$, target number $M$
\State Sample initial subset $X_m \subset X$ (at least 3 points)
\State Optimise kernel hyperparameters $\btheta$ via ELBO 
\Function{ApproxRidgeLeverage}{$X', S$}
    \State Compute $C = K(X', S)$ and $W = K(S, S)$
    \State For each $\bx\in X'$, compute $\tau(\bx) = \mathrm{tr}\!\left(K(\bx,\bx) - C_{\bx} W^{-1} C_{\bx}^\top\right)$
    \State \Return normalised scores $\tau / \sum_{\bx\in X'} \tau(\bx)$
\EndFunction
\Function{RLSRecursive}{$X', M$}
    \If{$|X'| \le \lceil 4 M \log(\max(2,M)) \rceil$}
        \State Sample subset of $X'$ proportional to approximate leverage scores
        \State \Return subset
    \EndIf
    \State Randomly select half of $X'$: $X_{\mathrm{ref}} \subset X'$
    \State $S \gets$ RLSRecursive($X_{\mathrm{ref}}, M$)
    \State Compute leverage scores $\tau = \text{ApproxRidgeLeverage}(X', S)$
    \State Sample subset of $X'$ proportional to $\tau$
    \State \Return sampled subset
\EndFunction
\State $X_{\mathrm{over}} \gets$ RLSRecursive($X, M$)
\State Select final $M$ points from $X_{\mathrm{over}}$ (e.g.\ by column norms)
\State \textbf{Output:} inducing set $X_m$
\end{algorithmic}
\Description{Pseudocode for recursive ridge-leverage-score Nystr\"om selection. The method recursively builds reference subsets, estimates residual kernel leverage scores, and samples a final inducing dictionary.}
\end{algorithm}


\subsubsection{$k$-means++ and Lloyd refinement}
\label{subsubsec:kmeans}
$k$-means++ \cite{inproceedings} places representatives by biased seeding, i.e. points far from existing centroids are sampled with higher probability. In our setting we compute distances in the fundamental region (i.e.\ on $\Pi(\bar \bx)$), so that clustering is performed in a translation- and rotation-normalised geometry. Lloyd refinement \cite{bachem16fast} then iteratively updates centroids to local minimisers of the within-cluster squared distance objective.

\begin{algorithm}
\caption{$k$-means++ inducing point selection} 
\label{alg:kmeanspp}
\begin{algorithmic}[1]
\State \textbf{Input:} training set $X$, target number of centroids $M$
\State Sample one random point $\bx$ from $X$ as the first centroid
\For{$m = 2:M$}
    \State Compute squared distances from all points in $X$ to the nearest centroid using Euclidean distance on $\Pi(\bar \bx)$
    \State Set sampling probabilities proportional to these squared distances
    \State Sample a new centroid from $X$ according to these probabilities
    \State Add the new centroid to the centroid set
\EndFor
\State \textbf{Output:} centroids (used as inducing locations)
\end{algorithmic}
\Description{Pseudocode for $k$-means++ seeding. New centroids are sampled with probabilities proportional to squared distance from the current centroid set in fundamental-region coordinates.}
\end{algorithm}


\subsubsection{MMD based refinement}
\label{subsubsec:mmd}
The maximum mean discrepancy (MMD) based method explicitly targets distribution matching. It refines an inducing set $Z$ so that the empirical distribution on $Z$ approximates that of the full dataset $X$ in the kernel mean embedding induced by $K_\Pi$. The procedure below is a simple stochastic swap heuristic that accepts replacements with a probability proportional to the reduction of $\mathrm{MMD}^2_K(X,Z)$. A fast computation of this reduction is given in Appendix~\ref{subsec:MMD-comp}.

\begin{algorithm}[H]
\caption{MMD-based randomised inducing point refinement}
\label{alg:mmd-refine}
\begin{algorithmic}[1]
\State \textbf{Input:} dataset $X$, initial inducing set $Z$ with $|Z|=M$, kernel $K$, parameter $\lambda>0$, iterations $T$
\For{$t=1$ to $T$}
    \State Sample $\bx \sim \mathrm{Unif}(X)$ and $\bz \sim \mathrm{Unif}(Z)$
    \State Set $Z'=(Z\setminus \{\bz\}) \cup \{\bx\}$
    \State Compute $\Delta = \mathrm{MMD}^2_K(X,Z) - \mathrm{MMD}^2_K(X,Z')$
    \If{$\Delta > 0$}
        \State Accept with probability $p = 1 - \exp(-\lambda \Delta)$
        \If{accepted} $Z \gets Z'$ \EndIf
    \EndIf
\EndFor
\State \textbf{Output:} refined inducing set $Z$
\end{algorithmic}
\Description{Pseudocode for MMD-based inducing-point refinement. Random swaps are proposed and accepted probabilistically when they reduce the squared maximum mean discrepancy between the full and inducing input distributions.}
\end{algorithm}

Notice that the acceptance probability depends on the parameter $\lambda\geq0.$ Larger values of $\lambda$ correspond to higher acceptance probabilities. In our experiments we set the default to $\lambda=1.$

\subsection{Implementation}
\label{subsec:implementation}

We compare the full equivariant GP, the equivariant SVGP, and PCG-based matrix-free inference under the kernel $K_\Pi$ in \eqref{eq:Kpi}. The full GP and PCG use the same covariance model, with PCG replacing the dense linear-algebra operations by iterative matrix-free solves. The SVGP uses the variational approximation introduced in Section~\ref{sec:sparse-gps} with inducing inputs selected by the methods of Section~\ref{subsec:inducing}. The hyperparameters of the three approaches are estimated separately using the procedures specified below. The data set contains $21000$ N-methylformamide configurations with three-dimensional dipole responses. For each replicate, we sample a test set of $n'=1000$ configurations without replacement. From the remaining observations, we construct 13 nested training sets of sizes between $100$ and $20000.$ The first training set contains $100$ observations, and each subsequent set is obtained by augmenting the preceding set with previously unused observations. This produces a genuine learning curve within each replicate, since observations are added rather than resampled independently at every training size. The complete procedure is repeated for $50$ independently seeded train--test splits (replicates). At every training size, predictive accuracy and distributional quality are measured using the RMSE and joint LogS defined in \eqref{eq:rmse} and \eqref{eq:logs}, respectively.

The base kernel $K_{\bar A}$ is parameterized as in \eqref{eq:KAforNMF}. All dense kernel matrices used during optimization and prediction receive a diagonal regularization of $10^{-8}.$  For the small-data ($n\leq2000$) comparison with the non-equivariant base GP, the same optimization and prediction procedures are used after replacing $K_\Pi$ by the diagonal squared exponential kernel on the original molecular coordinates. 
\subsubsection{Full GP implementation}
The full GP estimates the parameter vector
\begin{equation*}
\btheta_{\mathrm{full}}=(\ell,\sigma)
\end{equation*} by minimizing the negative log marginal likelihood \eqref{eq:likelihood}. We use Adam with initialization $(\ell_0,\sigma_0)=(1,1)$ and learning rate $0.01$. Separate fits are performed with optimization budgets in form of Adam iterations in $\{10,250,1000\}.$ For $n\leq1000$, the likelihood and its gradient are evaluated on the complete training set. For $n>1000$, parameter estimation uses a reproducible subset of $1000$ training configurations at each iteration. The resulting parameters are subsequently used to evaluate the full-GP posterior using all $n$ training observations. 
\subsubsection{SVGP implementation}
To compute the learning curves for the SVGP, for each inducing ratio $r$, we set the size of the inducing set
\begin{equation*}
M=\lfloor rn\rfloor,\qquad r\in\{0.1,0.2,0.3\}.
\end{equation*}
The main learning-curve comparison among all inducing-point methods uses $r=0.1$. The ratios $r=0.2$ and $r=0.3$ are additionally considered in the small-data and computational-resource comparisons. Once selected, the inducing inputs are kept fixed throughout variational training. The SVGP parameter vector is $\btheta_{\mathrm{SVGP}}=(\ell,\sigma,\sigma_n)$, where $\sigma_n$ is the additional standard deviation of the required noise of the variational distribution. The variational distribution $q(\boldsymbol{u})=\mathcal{N}(\bmm,S)$ is initialized with $\bmm=\boldsymbol{0}$ and $S=K(Z,Z)$. Equivalently, in the natural-parameter implementation, the initial natural mean is zero and the initial precision is $K(Z,Z)^{-1}$. Training consists of ten epochs. Within each epoch, we first perform ten natural-gradient updates of $(\bmm,S)$ with step size $\rho=0.2$. These are followed by ten Adam updates of $(\ell,\sigma,\sigma_n)$ based on the minibatch ELBO \eqref{eq:ELBO_minibatch}. Adam uses learning rate $0.01$. The kernel and likelihood parameters are initialized at $(\ell_0,\sigma_0,\sigma_{n,0})=(1,1,0.1).$

The likelihood minibatches are sampled from the non-inducing training observations. Their size is
\begin{equation*}
B=\max\left\{10,\left\lfloor b(n-M)\right\rfloor\right\},\qquad b=0.2,
\end{equation*}
and their likelihood contribution is rescaled by $(n-M)/B$. The additional minibatch study reported in the appendix uses $b\in\{0.025,0.05,0.2\}$ with all remaining SVGP settings unchanged.

The inducing inputs are selected according to the random, greedy, M-DPP, recursive-RLS, $k$-means++, $k$-means++ with Lloyd refinement, and MMD methods specified in Section~\ref{subsec:inducing}. Each method returns exactly $M$ distinct training configurations. For the four kernel-dependent methods greedy, M-DPP, recursive RLS, and MMD, the kernel parameters entering the selection criterion are obtained from a selector-specific preliminary SVGP fit. The preliminary inducing sets are initialized as specified by the corresponding algorithms. Greedy uses a uniformly sampled seed set of size $\lfloor0.05n\rfloor,$ M-DPP uses its initial uniformly sampled size-$M$ subset, recursive RLS uses three initial configurations and MMD uses a separate uniformly sampled set of size $\lfloor0.05n\rfloor$ for parameter estimation and an independently initialized size-$M$ set for the MMD refinement. For $n\leq1000$, this preliminary fit uses the complete partition into preliminary inducing and remaining training observations. For $n>1000$, it uses at most $1000$ configurations, of which at most $200$ are preliminary inducing inputs. The retained configurations are selected deterministically from the ordered training set. The preliminary SVGP is initialized at $(1,1,0.1)$ and trained for ten epochs, each containing ten natural-gradient updates followed by ten Adam updates with learning rate $0.01$. Its fitted values of $(\ell,\sigma)$ parameterize the corresponding selection criterion and initialize the main SVGP fit. 

The numerical settings of the allocation methods are as follows. Greedy selection uses the $0.9$ quantile threshold specified in Algorithm~\ref{alg:greedy}. M-DPP performs $T=1000$ swap proposals in Algorithm~\ref{alg:mdpp}. Recursive RLS uses oversampling factor $3$ in Algorithm~\ref{alg:rls}. MMD uses $T=1000$ proposals and $\lambda=1$ in Algorithm~\ref{alg:mmd-refine}. Random selection samples uniformly without replacement. For $K_\Pi$, $k$-means++ seeding and Lloyd refinement operate on the fundamental-region coordinates $\Pi(\bar\bx)$. For the standard base-kernel comparison, they operate on the original input coordinates. Lloyd refinement uses at most $1000$ iterations and terminates when the maximum absolute centroid displacement is below $10^{-6}$. Empty clusters are reinitialized with the configurations having the largest current assignment error. Since the SVGP implementation represents inducing inputs by training-set indices, each final centroid is associated with a distinct training configuration from its cluster.

\subsubsection{Equivariant PCG-GP implementation}
For the PCG implementation, we exploit the scalar-diagonal structure of the base kernel \eqref{eq:KAforNMF}. Define
\begin{equation*}
k_{\btheta}(\boldsymbol{u},\boldsymbol{v}):=\sigma^2\exp\!\left(-\frac{\|\boldsymbol{u}-\boldsymbol{v}\|_2^2}{2\ell^2}\right),\qquad K_{\bar A}(\boldsymbol{u},\boldsymbol{v};\btheta)=k_{\btheta}(\boldsymbol{u},\boldsymbol{v})I_3.
\end{equation*}
For each training configuration, let $\rho_i:=\rho_{s(\bar\bx_i)}$. The $(i,j)$ block of the equivariant covariance matrix is then
\begin{equation*}
\left[K_\Pi(X,X;\btheta)\right]_{ij}=k_{\btheta}\!\left(\PiA(\bar\bx_i),\PiA(\bar\bx_j)\right)
\rho_i^\top\rho_j.
\end{equation*}
Let
\begin{equation*}
K_{XX}^{(0)}:=\left[k_{\btheta}\!\left(\PiA(\bar\bx_i),\PiA(\bar\bx_j)\right)\right]_{i,j=1}^n,\qquad
B_{\btheta}:=K_{XX}^{(0)}+\delta I_n,
\end{equation*}
where $\delta>0$ is a numerical diagonal regularisation, and define $U:=\operatorname{blockdiag}(\rho_1,\ldots,\rho_n)$. Since every $\rho_i$ is orthogonal, we can write
\begin{equation*}
K_\Pi(X,X;\btheta)+\delta I_{3n}=U^\top\left(B_{\btheta}\otimes I_3\right)U.
\end{equation*}

Let $\widetilde Y\in\mathbb{R}^{n\times3}$ contain the locally rotated training responses, with $i$th row $\rho_i\boldsymbol{y}_i$. The original $3n$-dimensional linear system is then equivalent to
\begin{equation}
B_{\btheta}\widetilde{ A}=\widetilde Y,\qquad\widetilde{ A}\in\mathbb{R}^{n\times3}.
\label{eq:fast pcg three rhs}
\end{equation}
Hence, only one scalar $n\times n$ kernel operator is required, with the three output coordinates treated as simultaneous right-hand sides. The kernel parameters $(\ell,\sigma)$ are estimated using at most $2000$ observations from the corresponding nested training set. On this subset $H$, the exact negative log marginal likelihood in the rotated frame is
\begin{equation*}
\widetilde{\mathcal L}_{H}(\btheta)=\frac12\left[\left\langle\widetilde Y_H,B_{\btheta,H}^{-1}\widetilde Y_H\right\rangle_F+3\log|B_{\btheta,H}|+3|H|\log(2\pi)\right],
\end{equation*}
where $\langle A,B\rangle_F:=\operatorname{tr}(A^\top B)$. Because $|H|\leq2000$, this objective is evaluated using a dense Cholesky factorisation of the scalar matrix $B_{\btheta,H}$. Optimisation is initialized at $(\ell,\sigma)=(1,1)$, using at most $200$ iterations. The resulting parameters are subsequently used for inference on the complete training set. For $n\leq2000$, Equation~\eqref{eq:fast pcg three rhs} is solved directly by dense Cholesky factorisation. For larger training sets, products with $B_{\btheta}$ are evaluated in double precision using PCG on an NVIDIA H100 GPU. The PCG uses a pivoted-Cholesky as preconditioner improving convergence of the iterative solver. The final posterior mean solve uses relative residual tolerance $10^{-7}$ and at most $4000$ iterations. For test configurations $X^\ast=(\bx_1^\ast,\ldots,\bx_{n_\ast}^\ast)$, define
\begin{equation*}
K_{\ast X}^{(0)}:=\left[k_{\btheta}\!\left(\PiA(\bar\bx_i^\ast),\PiA(\bar\bx_j)\right)
\right]_{\substack{i=1,\ldots,n_\ast\\j=1,\ldots,n}}.
\end{equation*}
The posterior mean in the local test frames is \begin{equation*}
\widetilde M_\ast=K_{\ast X}^{(0)}\widetilde{ A}.
\end{equation*}
Writing $\rho_i^\ast:=\rho_{s(\bar\bx_i^\ast)}$, the prediction in the original Cartesian frame is recovered as \begin{equation*}
\boldsymbol{m}_i^\ast=(\rho_i^\ast)^\top
(\widetilde M_\ast)_{i,:}^{\top},
\qquad
i=1,\ldots,n_\ast.
\end{equation*}
The same frame reduction is used to evaluate the complete joint posterior covariance required by the LogS in \eqref{eq:logs}. Define
\begin{align*}
K_{X\ast}^{(0)}&:=\left(K_{\ast X}^{(0)}\right)^\top,\\ K_{\ast\ast}^{(0)}
&:=\left[k_{\btheta}\!\left(\PiA(\bar\bx_i^\ast),\PiA(\bar\bx_j^\ast)\right)\right]_{i,j=1}^{n_\ast}.
\end{align*} The scalar conditional covariance at all test configurations is then
\begin{equation}S_\ast=K_{\ast\ast}^{(0)}+\delta I_{n_\ast}-K_{\ast X}^{(0)}B_{\btheta}^{-1}K_{X\ast}^{(0)}.
\label{eq:pcg scalar posterior covariance}
\end{equation}
We use $\delta=10^{-6}$ for numerical stability in the score computation.

Let \begin{equation*}
U_\ast:=\operatorname{blockdiag}\left(\rho_1^\ast,\ldots,\rho_{n_\ast}^\ast \right).
\end{equation*}
The covariance of the complete stacked test vector in the original Cartesian frames is
\begin{equation} \Sigma_\ast=U_\ast^\top\left(S_\ast\otimes I_3\right)U_\ast.
\label{eq:pcg full posterior covariance}
\end{equation}
Thus, $S_\ast$ retains all posterior correlations among the test configurations, while $U_\ast$ recovers the corresponding covariance between Cartesian output components. In particular, Equation~\eqref{eq:pcg full posterior covariance} represents the same joint $3n_\ast$-dimensional posterior covariance as direct conditioning with the vector-valued kernel $K_\Pi$.

To compute \eqref{eq:pcg scalar posterior covariance}, we solve
\begin{equation}B_{\btheta}V=K_{X\ast}^{(0)}
\label{eq:pcg covariance solves}
\end{equation}
and form
\begin{equation*}
S_\ast=K_{\ast\ast}^{(0)}+\delta I_{n_\ast}-K_{\ast X}^{(0)}V.
\end{equation*}
For $n\leq2000$, Equation~\eqref{eq:pcg covariance solves} is evaluated by dense scalar Cholesky solves. For larger $n$, its $n_\ast$ right-hand sides are processed in blocks of $25$ using the same matrix-free kernel operator and pivoted-Cholesky preconditioner as above. These covariance solves use relative residual tolerance $10^{-8}$ and at most $6000$ iterations. With $n_\ast=1000$, this requires $40$ blocks of PCG solves. Neither the $3n\times3n$ training covariance nor the $3n_\ast\times3n_\ast$ posterior covariance is constructed and only the final scalar matrix $S_\ast\in\mathbb{R}^{1000\times1000}$ is stored and factorised.

Let $\widetilde E_\ast\in\mathbb{R}^{n_\ast\times3}$ contain the posterior residuals in the local test frames, with rows
\begin{equation*}
(\widetilde E_\ast)_{i,:}=\rho_i^\ast\left(\boldsymbol{y}_i^\ast-\boldsymbol{m}_i^\ast
\right).
\end{equation*}
Orthogonality of $U_\ast$ and the Kronecker structure in \eqref{eq:pcg full posterior covariance} imply
\begin{align*}
q_\ast:=(\boldsymbol{y}_\ast-\boldsymbol{m}_\ast)^\top\Sigma_\ast^{-1}(\boldsymbol{y}_\ast-\boldsymbol{m}_\ast)
&=\left\langle\widetilde E_\ast,S_\ast^{-1}\widetilde E_\ast\right\rangle_F,\\\log|\Sigma_\ast|
&=3\log|S_\ast|.
\end{align*}
After computing the Cholesky factorisation $S_\ast=L_\ast L_\ast^\top$, the Mahalanobis and log-determinant terms are then
\begin{equation*}
q_\ast=\|L_\ast^{-1}\widetilde E_\ast\|_F^2,\qquad\log|\Sigma_\ast|=6\sum_{i=1}^{n_\ast}\log(L_{\ast,ii}).
\end{equation*}
We thus obtain a fast and memory-efficient evaluation of the LogS as
\begin{equation}\mathrm{LogS}=q_\ast+\log|\Sigma_\ast|+3n_\ast\log(2\pi).
\end{equation}

\begin{rem}[Posterior equivariance under approximate covariance solves]
Since Equation~\eqref{eq:pcg covariance solves} is solved numerically, posterior stochastic equivariance could in principle be violated if test configurations related by a group transformation received different approximate solutions. This could occur, for example, if the corresponding PCG solves used independent random initial values, different preconditioners, or different stopping rules.

In our implementation, the $j$th column of $V$ is obtained from
\begin{displaymath}
B_{\btheta}V_{:,j}
=
\left(K_{X\ast}^{(0)}\right)_{:,j}.
\end{displaymath}
For a transformed test configuration $g\star\bar\bx_j^\ast$, invariance of the projection gives
\begin{displaymath}
\PiA(g\star\bar\bx_j^\ast)=\PiA(\bar\bx_j^\ast).
\end{displaymath}
The transformed configuration therefore produces exactly the same scalar right-hand side. Moreover, $B_{\btheta}$ and its pivoted-Cholesky preconditioner depend only on the projected training configurations and are unchanged by transformations of the test inputs. Every covariance solve is initialized at zero and uses the same deterministic preconditioner and stopping rule. Consequently, the finite PCG iterates for a test configuration and any of its transformed versions coincide in exact arithmetic. Processing the right-hand sides in blocks of $25$ only evaluates several independent PCG recurrences simultaneously and does not couple or alter them.
\end{rem}

\subsubsection{Computational environment and recorded times}
The full-GP and SVGP experiments are run in R~4.4.2 on one CPU core, with R-level parallelization disabled. PCG inference is run in double precision on one NVIDIA H100 GPU using CUDA~12.2 and PyKeOps, together with four CPU cores. For each method, elapsed wall-clock time is recorded separately for model fitting, posterior-mean prediction, and predictive-covariance and LogS evaluation. For PCG, model-fitting time comprises parameter estimation, preconditioner construction, and the final training solve. Peak memory is measured by Slurm's maximum resident set size, \texttt{MaxRSS}, and reported in GB.

All code used to generate the experiments is available at \href{https://anonymous.4open.science/r/Scalable-Equivariant-Gaussian-Processes-6F6F/README.md}{GitHub}.
\begin{figure}[tbp]
    \centering
    \includegraphics[width=\linewidth]{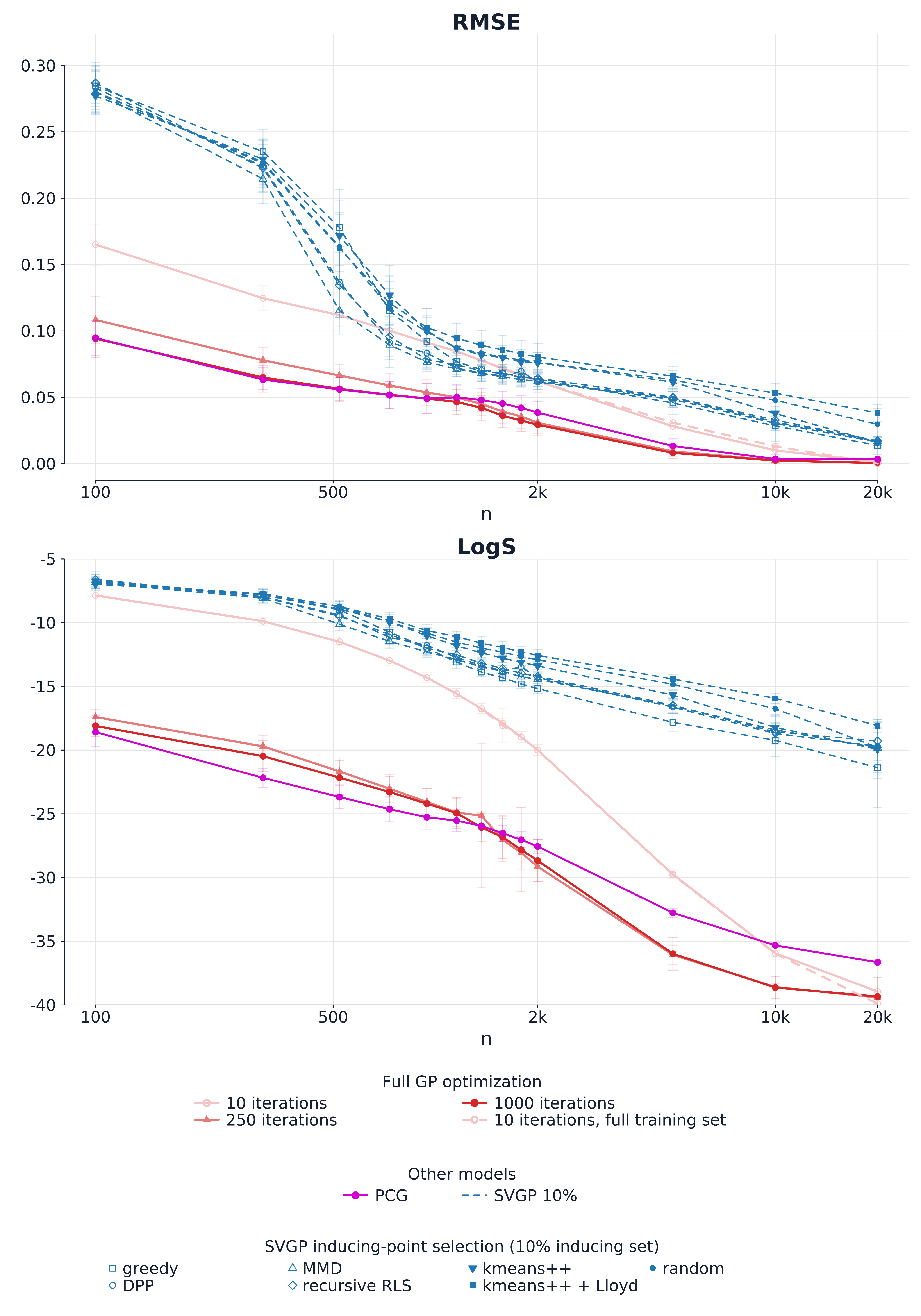}
    \caption{RMSE and LogS learning curves comparing SVGP variants, PCG-based matrix-free full-GP inference, and dense full-GP baselines across increasing training-set sizes.}
    \Description{Two vertically arranged learning-curve panels. The upper panel shows RMSE and the lower panel shows LogS against training-set size for dense full-GP fits with several optimization budgets, matrix-free PCG inference, and SVGP models using several inducing-point allocation strategies.}
    \label{fig:lc-pcg}
\end{figure}

\begin{figure}[tbp]
    \centering
    \includegraphics[width=\linewidth]{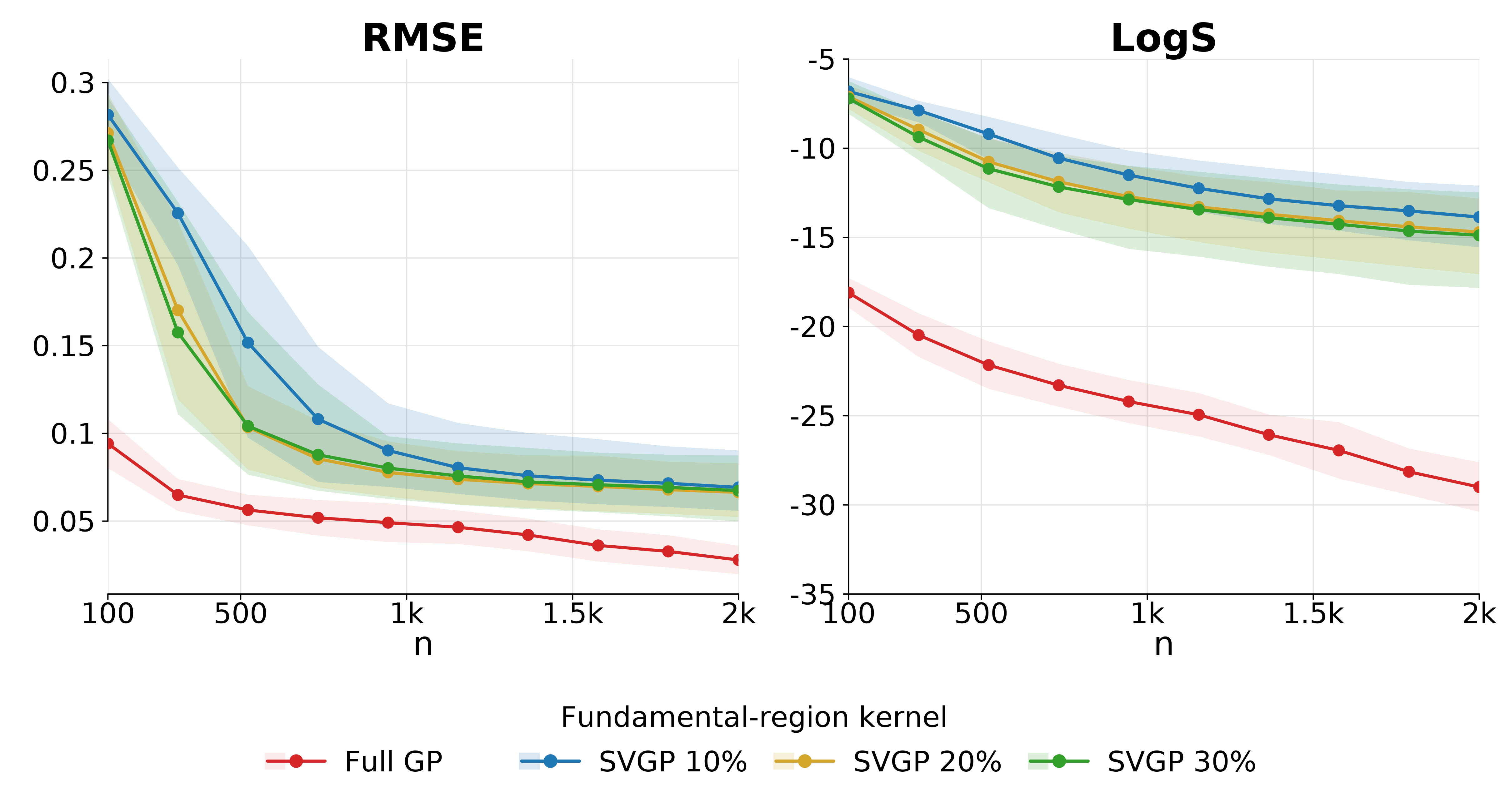}
    \caption{Learning curves for training sizes $n\leq 2000$. Shaded areas represent the range of the mean score plus or minus one standard deviation over all inducing-point allocation schemes for SVGP inducing-set ratios $r\in\{0.1,0.2,0.3\}$.}
    \Description{Two learning-curve panels show RMSE and LogS against training-set size. A dense full-GP curve is compared with three translucent SVGP envelopes corresponding to inducing-set ratios of 10, 20, and 30 percent.}
    \label{fig:envelope plot}
\end{figure}

\subsection{Results}
\label{subsec:nmf-results}

Figure~\ref{fig:lc-pcg} presents the learning curves for training-set sizes between $100$ and $20000$, comparing the equivariant full GP, PCG-based matrix-free inference, and the SVGP with inducing-size ratio $r:=|Z|/n=0.1$ across the different inducing-point allocation schemes. The PCG posterior mean closely tracks that of the full GP trained with the largest hyperparameter-optimization budget, and the two methods attain similar RMSE values over most training sizes. The difference is more pronounced for the LogS, which is more sensitive to errors in both the posterior mean and the complete joint posterior covariance. Nevertheless, PCG consistently achieves substantially lower LogS values than the SVGP variants. This indicates that the frame-reduced PCG implementation retains much of the distributional accuracy of the dense full GP, in addition to providing accurate point predictions. The remaining differences between the dense and PCG curves can arise from their separately estimated hyperparameters and from the finite tolerances of the iterative training and covariance solves. Tightening these tolerances may further reduce the discrepancy, at the cost of additional computation. The full-GP curves also demonstrate the importance of sufficiently optimizing the kernel hyperparameters. Increasing the Adam budget from $10$ to $250$ or $1000$ iterations generally improves both RMSE and LogS. Capping the hyperparameter-estimation set at $1000$ configurations makes these longer optimization runs feasible throughout the learning curve, after which the resulting parameters are used for prediction with all $n$ training observations. The comparison with the ten-iteration fit using the complete training set shows that increasing the amount of data used in each likelihood evaluation does not compensate consistently for an insufficient optimization budget. We therefore interpret the optimizer budget as an important component of the full-GP baseline, without concluding that predictive performance is independent of the training-set size. Regarding the SVGP, the kernel-dependent inducing-point allocation schemes generally provide better predictive distributions than the kernel-free schemes in the small and intermediate training regimes. MMD in Algorithm~\ref{alg:mmd-refine} and M-DPP in Algorithm~\ref{alg:mdpp} attain particularly strong scores below $n=2000$, whereas the greedy method in Algorithm~\ref{alg:greedy} becomes competitive and often gives the best SVGP scores for larger training sets. At the largest training sizes, the differences among the allocation schemes become smaller, and $k$-means++ and random selection become increasingly competitive. This suggests that the data contain increasing redundancy relative to the inducing-set size, so that the benefit of the more involved kernel-dependent allocation criteria eventually diminishes. Figure~\ref{fig:envelope plot} summarizes the effect of increasing the inducing-size ratio from $r=0.1$ to $r\in\{0.2,0.3\}$. Larger inducing sets improve both RMSE and LogS, but the improvement is concentrated mainly in the smallest training regimes, particularly for $n\leq500$. Beyond this regime the envelopes largely overlap, indicating diminishing predictive returns from increasing $M$, while the computational cost continues to grow. Figure~\ref{fig:lcsmallregime3rows} reports the individual learning curves for these ratios and also illustrates the importance of encoding the structural constraint directly in the kernel. The models using the standard base kernel fail to improve meaningfully with increasing $n$, for both the full GP and the SVGP, whereas the fundamental-region kernel $K_\Pi$ yields steadily improving predictions. This provides direct empirical evidence that scalability alone is insufficient when the rotational structure of the dipole response is not represented in the covariance model. The minibatch comparison in Figure~\ref{fig:boxplotssvgp20k} further shows that the variational training configuration affects the final SVGP accuracy. At $n=20000$, increasing the minibatch size from $500$ to $1000$ or $4000$ generally improves RMSE and LogS, although the magnitude of this effect depends on the inducing-point allocation method. The improvement comes at the expected increase in computational cost. We consequently use a minibatch containing $20\%$ of the non-inducing training observations for the main learning-curve comparison.

The computational results in Figure~\ref{fig:computation times} are consistent with the theoretical scaling of the three approaches. With the output dimension fixed at three, dense full-GP inference requires $\mathcal{O}(n^3)$ operations and $\mathcal{O}(n^2)$ memory. An SVGP update with minibatch size $B$ and inducing-set size $M$ requires $\mathcal{O}(BM^2+M^3)$ operations and $\mathcal{O}(BM+M^2)$ memory. Since the experiments use $M=\lfloor rn\rfloor$ and $B=0.2(n-M)$, fixed inducing ratios retain cubic time and quadratic memory scaling asymptotically, but with constants that depend strongly on $r$, which explains the increasing separation between the $10\%$, $20\%$, and $30\%$ SVGP curves. For the large-$n$ PCG branch, let $T_{\mathrm{CG}}$ denote the number of iterations and $R$ the pivoted-Cholesky preconditioner rank. Matrix-free training requires $\mathcal{O}(T_{\mathrm{CG}}n^2+nR^2+R^3)$ operations and $\mathcal{O}(nR+R^2)$ memory, in addition to the bounded dense hyperparameter fit on at most $2000$ observations. For fixed $R$ and bounded iteration count, this is quadratic in time and linear in memory with respect to $n$. Posterior-mean evaluation after the training solve requires $\mathcal{O}(nn_\ast)$ operations. Evaluation of the full joint LogS is more expensive because the $n_\ast$ right-hand sides in \eqref{eq:pcg covariance solves} must additionally be solved, even though they are processed in memory-efficient blocks. Accordingly, PCG requires the least time for model fitting and posterior-mean evaluation and exhibits the smallest, nearly constant peak-memory footprint in Figure~\ref{fig:computation times}. Its full predictive-covariance and LogS computation grows more strongly and, for the largest training sets, is slower than predictive inference with the $10\%$ SVGP. The dense full GP shows the steepest increase in predictive-inference time and memory, whereas increasing the SVGP inducing ratio improves accuracy at a substantial computational cost. These wall-clock comparisons reflect both the algorithms and the computational environments specified in Section~\ref{subsec:implementation}, as the full GP and SVGP are evaluated on one CPU core, while PCG uses an NVIDIA H100 GPU. The operation and memory trends nevertheless agree with the theoretical scaling and demonstrate the complementary benefits of SVGP and PCG. The SVGP provides inexpensive approximate uncertainty quantification, while PCG provides full-GP-quality posterior means and joint predictive distributions with substantially reduced memory requirements.
\begin{figure}[H]
    \centering
    \includegraphics[width=\linewidth]{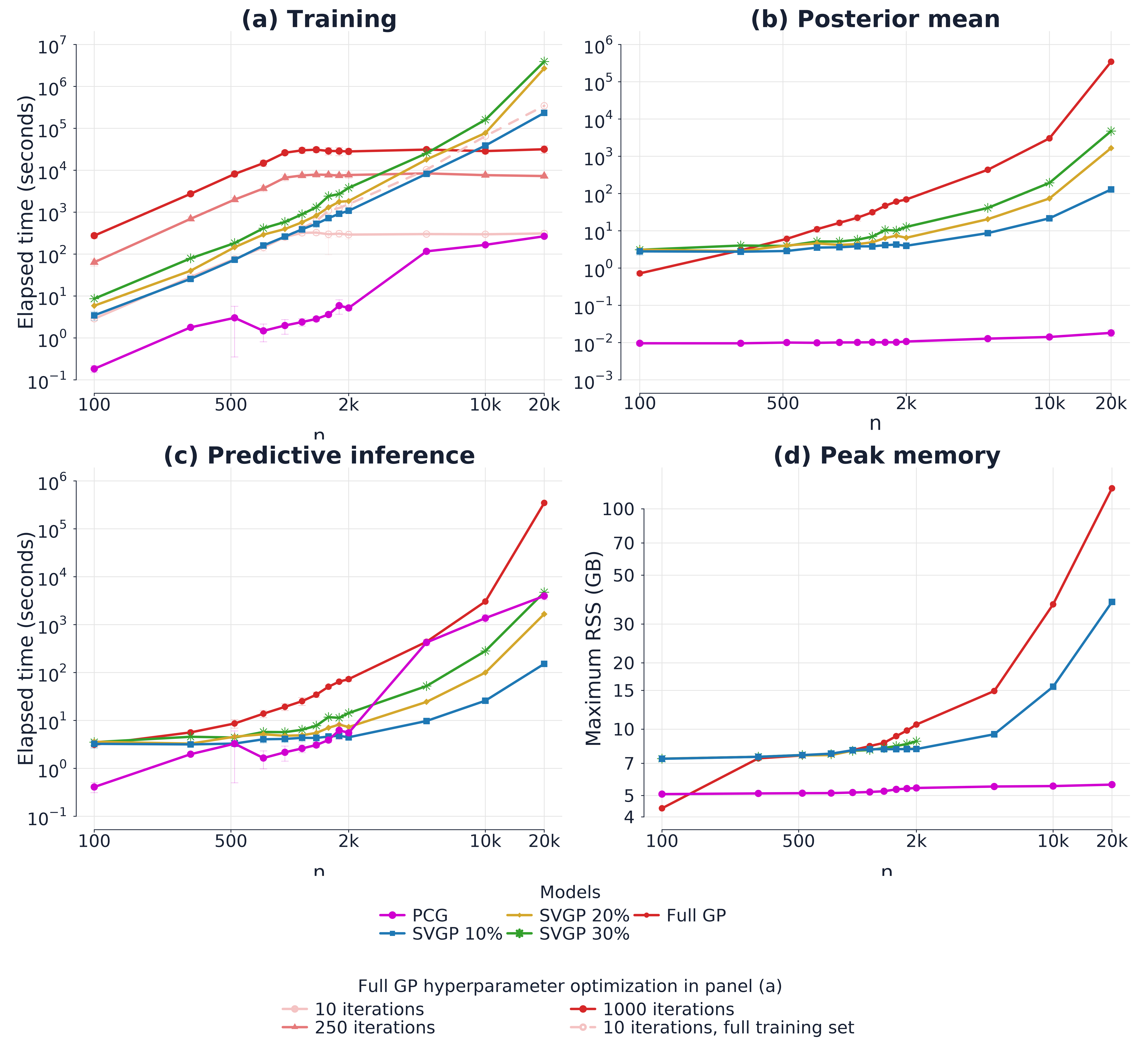}
    \caption{Wall-clock computation times and peak resident memory for the different GP models.}
    \Description{Four panels compare computational scaling against training-set size. They show fitting time, posterior-mean prediction time, predictive-inference time including covariance and LogS evaluation, and peak resident memory for the full GP, PCG, and SVGP models.}
    \label{fig:computation times}
\end{figure}

\section{Conclusion and Outlook}
\label{sec:discussion}
This work is guided by three main requirements for scalable equivariant GP modeling. Equivariant kernels must be fast to evaluate, equivariance must be preserved under scalable inference, and molecular kernel constructions must remain applicable as the dimension and number of atoms increase. Theorem \ref{thm:equivariant-conditioning} addresses the second requirement by establishing that stochastic equivariance is preserved under conditioning. The structure is therefore inherited by the full posterior distribution, including its mean, covariance, and dependencies between predictions. Theorem \ref{thm:finite response equiv} further establishes that equivariance of the conditional GP at a single input for finitely many sufficiently informative response values implies stochastic equivariance of the GP throughout its domain, thereby connecting finitely many local conditional statements to a global distributional property of the GP. Together, these results provide the theoretical basis for preserving stochastic equivariance in sparse and scalable GP inference. While the theoretical results apply to general equivariant kernels, in our experiments, integration-free fundamental-region kernels provide a computationally efficient realization of this framework. They avoid numerical group integration while preserving positive definiteness and equivariance and can be constructed for any fixed molecular type and number of atoms. Their particular structure also enables an additional simplification in the NMF application. Combining PCG with the scalar diagonal base kernel and orthogonal output transformations yields an exact frame reduction of the vector-valued kernel system to a scalar kernel system with three right-hand sides. This retains the posterior of the equivariant full GP while providing the computational and memory requirements of matrix-free PCG with the underlying base kernel, up to the three output coordinates. The frame reduction also makes the full joint posterior covariance feasible to evaluate through scalar-frame solves, although this remains substantially more expensive than posterior-mean prediction. This reduction is specific to the algebraic structure of the kernel used in this application, whereas the stochastic equivariance results hold for equivariant kernels more generally. PCG and SVGP provide complementary approaches to scalable inference. PCG replaces dense matrix factorization by matrix-free iterative solves and converges to the posterior of the full GP as the numerical tolerance is tightened. The equivariant SVGP provides a general sparse approximation that also can be combined with different equivariant kernels. Its inducing-set size and variational representation allow predictive accuracy and computational cost to be balanced while preserving stochastic equivariance of the resulting approximate posterior. The SVGP provides its full approximate predictive distribution, including the joint posterior covariance, in closed form through the inducing representation. This makes uncertainty quantification feasible at scale while preserving stochastic equivariance of the complete approximate posterior distribution. Its variational approximation can be less faithful to the full GP than a sufficiently converged PCG solve. The $\mathrm{SO}(2)$ experiments illustrate that equivariant SVGP posteriors respect the required transformation law even under strong sparsification. The approximation primarily affects predictive accuracy and the calibration and magnitude of posterior uncertainty, while the structural constraint is retained. The NMF experiments show that PCG and SVGP substantially reduce the computational requirements of equivariant GP inference. The PCG closely approximates the full GP with considerably lower memory usage, while the SVGP offers additional control through the inducing budget and the inducing-point allocation scheme. The two approaches consequently provide complementary routes to full equivariant predictive distributions. PCG retains high fidelity to the full GP and supports full uncertainty quantification through additional iterative solves, while SVGP gives direct access to approximate joint posterior uncertainty at lower computational cost. The observed saturation of the SVGP LogS indicates that the variational family and the inducing representation of the posterior covariance remain important bottlenecks. A limitation of the fundamental-region kernel used in the experiments is its potential discontinuity at the boundaries of the fundamental region. This complicates the usual joint optimization of inducing locations through the ELBO and motivates the discrete allocation schemes considered in this work, which provide better accuracy when combined with kernel-induced distances reflecting the equivariant structure. Future research could combine the general equivariant SVGP framework with smoother equivariant kernels, including potential-based constructions such as MOB-GP \citep{Sun_2022}. More expressive invariant base kernels and learned invariant or equivariant representations could further improve predictive accuracy.

Another important direction is the extension from datasets containing configurations of one fixed molecular type to chemically heterogeneous datasets such as QM9. Such data require a common representation for molecules with different numbers and types of atoms together with the corresponding permutation and rigid-motion symmetries. Equivariant neural networks could be used to learn suitable representations for equivariant GPs, combining flexible representation learning with probabilistic predictions and calibrated uncertainty. Equivariance also suggests applications in active learning and experimental design. Since observations related by group actions share structural information, symmetry-aware acquisition strategies may reduce the number of expensive evaluations required to learn a physical system. Vector-valued and operator-valued kernels arise naturally in PDE-constrained and multi-physics problems, where equivariant GP constructions could provide symmetry-consistent probabilistic surrogate models. Overall, the results indicate that equivariant Gaussian processes can provide symmetry-consistent and uncertainty-aware predictions at scales that were previously difficult to reach with GPs. Encoding symmetry through equivariant kernels and preserving it throughout conditioning and scalable inference provides a principled way to propagate physical structure through the full Bayesian pipeline. The resulting frameworks make full equivariant predictive distributions and feasible uncertainty quantification available in computationally demanding regimes.

\section*{Acknowledgments}
We acknowledge the support of the Digitization Commission (DigiK) of the University of Bern through the project ``Perception in Statistics, Econometrics and Probability.'' Calculations were performed on UBELIX (\url{https://www.id.unibe.ch/hpc}), the HPC cluster at the University of Bern. We thank August Lykke M{\o}ller and Ove Christiansen for providing the N-methylformamide data, and Oliver Warth for suggesting the MMD-based inducing-point selection scheme.

OpenAI GPT-5.6 Sol was used as a language and \LaTeX{} editing aid, for brainstorming possible formulations and implementation strategies, and for refactoring and debugging \texttt{R} and Slurm code for the HPC environment. The model was not used to conduct experiments, generate data, perform analysis, or originate the core scientific ideas underlying the work.
All suggestions were critically reviewed, independently implemented where relevant, and verified by the authors, who retain full responsibility for the algorithms, experimental design, results, interpretations, and final manuscript.

\bibliographystyle{abbrvnat}

\bibliography{references}

\appendix

\section{Effect of Different SVGP Batch Sizes}

\begin{figure}[H]
    \centering
    \includegraphics[width=\linewidth]{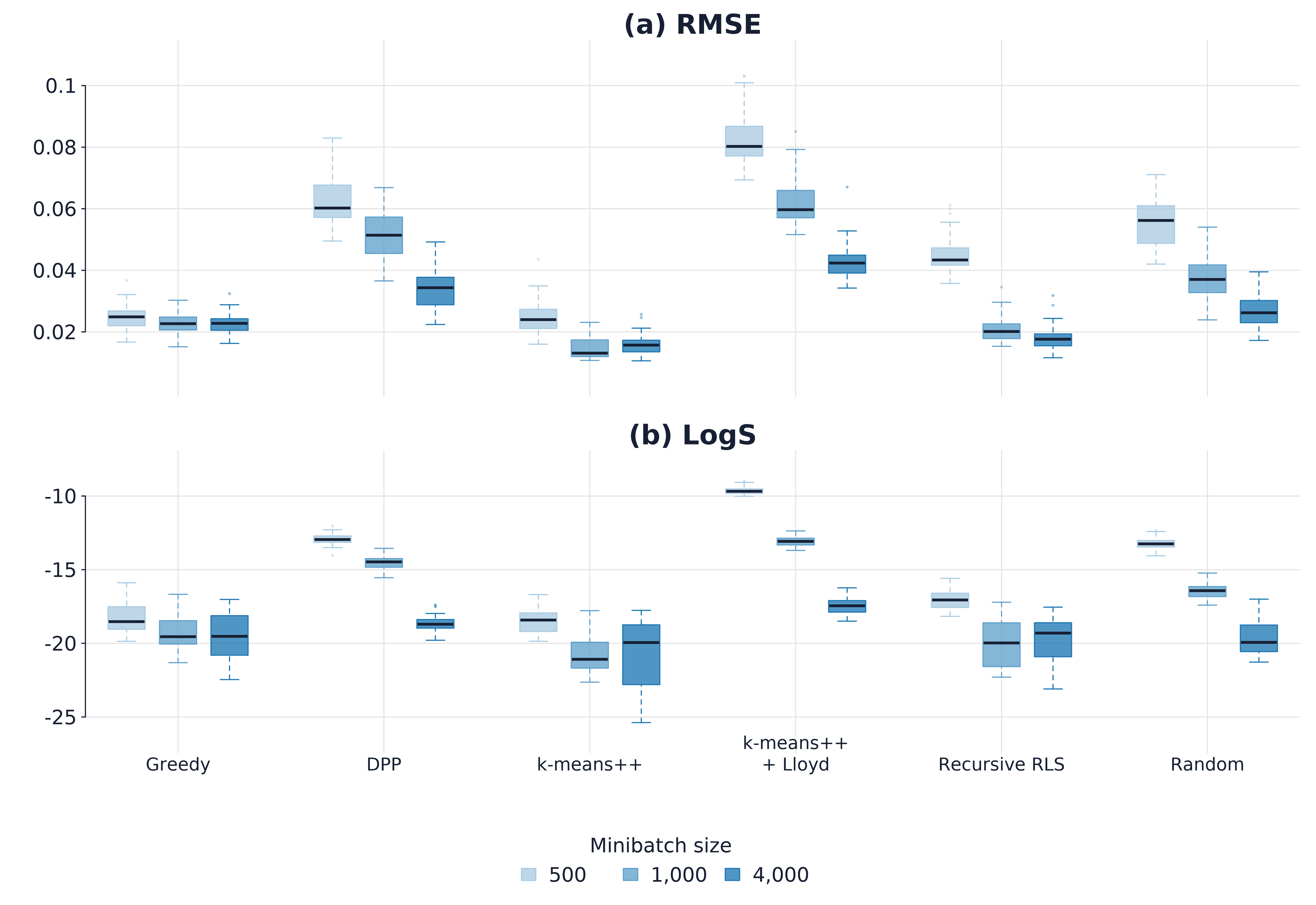}
    \caption{Boxplots comparing SVGP predictive performance at training size $n=20000$ for several minibatch ratios and inducing-point selection strategies.}
\Description{Boxplots comparing RMSE and log-score distributions for SVGP runs at training size 20000 under different minibatch ratios and inducing-point selection methods.}
    \label{fig:boxplotssvgp20k}
\end{figure}

\section{MMD-Based Inducing-Point Allocation}\label{subsec:MMD-comp}
Let $K:D\times D\to\mathbb R^{p\times p}$ be a positive-definite (PD) matrix-valued kernel. By vector-valued RKHS theory, there exists a separable Hilbert space $\mathcal U$ and a Hilbert–Schmidt feature operator $\Psi:D\to \mathrm{HS}(\mathbb R^p,\mathcal U)$ such that
\begin{equation}K(\bx,\bx')=\Psi(\bx)^\ast \Psi(\bx'),\qquad \langle \Psi(\bx),\Psi(\bx')\rangle_{\mathrm{HS}}=\operatorname{tr}\!\big(\Psi(\bx)^\ast \Psi(\bx')\big)=\operatorname{tr}K(\bx,\bx'). \label{eq:vv-feature}\end{equation}
For empirical input measures $\hat P_X=\frac1n\sum_{i=1}^n\delta_{\bx_i}$ and $\hat P_Z=\frac1m\sum_{j=1}^m\delta_{\bz_j}$, define the vector-valued RKHS {mean operators}
\begin{displaymath}\mu_X:=\frac1n\sum_{i=1}^n \Psi(\bx_i)\in \mathrm{HS}(\mathbb R^p,\mathcal U), \qquad \mu_Z:=\frac1m\sum_{j=1}^m \Psi(\bz_j)\in \mathrm{HS}(\mathbb R^p,\mathcal U).\end{displaymath}
We define the matrix-valued MMD as the squared Hilbert–Schmidt distance
\begin{equation}\mathrm{MMD}^2_K(X,Z):=\|\mu_X-\mu_Z\|_{\mathrm{HS}}^2. \label{eq:mmd-vvrkhs}\end{equation}
Expanding \eqref{eq:mmd-vvrkhs} with \eqref{eq:vv-feature} yields
\begin{align}
\mathrm{MMD}^2_K(X,Z)
&= \frac{1}{n^2}\!\sum_{i,i'} \langle \Psi(\bx_i),\Psi(\bx_{i'})\rangle_{\mathrm{HS}}
 + \frac{1}{m^2}\!\sum_{j,j'} \langle \Psi(\bz_j),\Psi(\bz_{j'})\rangle_{\mathrm{HS}}
 - \frac{2}{nm}\!\sum_{i,j} \langle \Psi(\bx_i),\Psi(\bz_j)\rangle_{\mathrm{HS}} \nonumber\\
&= \frac{1}{n^2}\!\sum_{i,i'} \operatorname{tr}K(\bx_i,\bx_{i'})
 + \frac{1}{m^2}\!\sum_{j,j'} \operatorname{tr}K(\bz_j,\bz_{j'})
 - \frac{2}{nm}\!\sum_{i,j} \operatorname{tr}K(\bx_i,\bz_j).
\label{eq:mmd-trace}
\end{align}
{Thus the scalar contraction that enters the MMD is $\operatorname{tr}K(\cdot,\cdot)$ by construction.} You can equivalently view \eqref{eq:mmd-trace} as a scalar MMD computed with the (automatically PD) {trace kernel} $k_T(\bx,\bx'):=\operatorname{tr}K(\bx,\bx')$.

\begin{align*}
\mathrm{MMD}^2_K(X,Z)
&=\frac{1}{n(n-1)}
  \sum_{\substack{i,i'=1\\ i\neq i'}}^{n}
  \operatorname{tr}K(\bx_i,\bx_{i'}) \\
&\quad+\frac{1}{m(m-1)}
  \sum_{\substack{j,j'=1\\ j\neq j'}}^{m}
  \operatorname{tr}K(\bz_j,\bz_{j'}) \\
&\quad-\frac{2}{nm}
  \sum_{i=1}^{n}\sum_{j=1}^{m}
  \operatorname{tr}K(\bx_i,\bz_j).
\end{align*}

\noindent When replacing one inducing point $\bz_r$ by a new point $\bz_{\text{new}}$, only the second and third sums change. We can write the updated MMD as

\begin{align*}
\mathrm{MMD}^2_K(X,Z')
&= C_X
 + \frac{1}{m(m-1)}
   \!\left[
     \sum_{\substack{j,j'=1\\ j,j'\neq r}}^{m}
       \mathrm{tr}\big[K(\bz_j,\bz_{j'})\big]
     + 2\!\!\sum_{\substack{j=1\\ j\neq r}}^{m}
       \mathrm{tr}\big[K(\bz_j,\bz_{\text{new}})\big]
   \right] \\
&\quad
 - \frac{2}{nm}
   \!\left[
     \sum_{i=1}^{n}\!\!\sum_{\substack{j=1\\ j\neq r}}^{m}\!
       \mathrm{tr}\big[K(\bx_i,\bz_j)\big]
     + \sum_{i=1}^{n}\!\mathrm{tr}\big[K(\bx_i,\bz_{\text{new}})\big]
   \right],
\end{align*}

\noindent where $C_X$ collects all constant terms that depend only on $X$ and thus cancel in the MMD difference. Define
\begin{equation*}
a_r(\bz):=
\frac{1}{m-1}\sum_{j\neq r}\operatorname{tr}K(\bz_j,\bz)
-\frac{1}{n}\sum_{i=1}^{n}\operatorname{tr}K(\bx_i,\bz).
\end{equation*}
The incremental change can then be computed compactly as
\begin{equation*}
\Delta:=\mathrm{MMD}^2_K(X,Z')-\mathrm{MMD}^2_K(X,Z)
=\frac{2}{m}\left[a_r(\bz_{\mathrm{new}})-a_r(\bz_r)\right].
\end{equation*}

\noindent This form allows efficient local updates: only $\mathcal{O}(m+n)$ kernel evaluations are required to test the effect of swapping $\bz_r$ with $\bz_{\text{new}}$, instead of recomputing all pairwise terms.

\begin{figure}[tbp]
    \centering
    \includegraphics[width=1\linewidth]{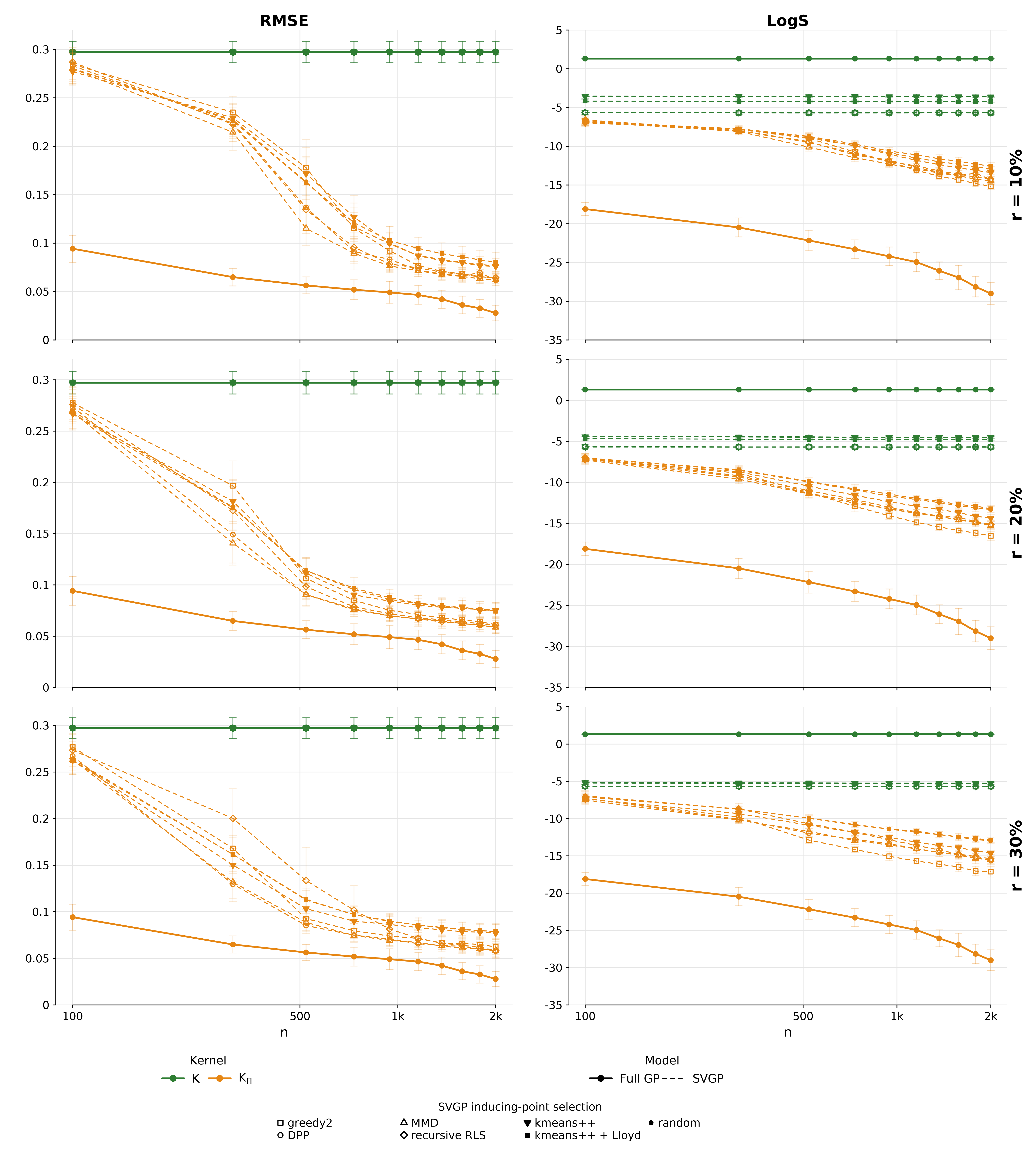}
    \caption{Extended SVGP learning curves in the small-data regime, showing three rows of metrics across inducing-point allocation methods and inducing-set ratios.}
    \Description{An extended appendix version of the small-data SVGP learning-curve figure, with three rows of panels comparing inducing-point selection methods across training sizes and inducing-set ratios.}
    \label{fig:lcsmallregime3rows}
\end{figure}

\begin{figure}[tbp]
    \centering
    \includegraphics[width=\linewidth]{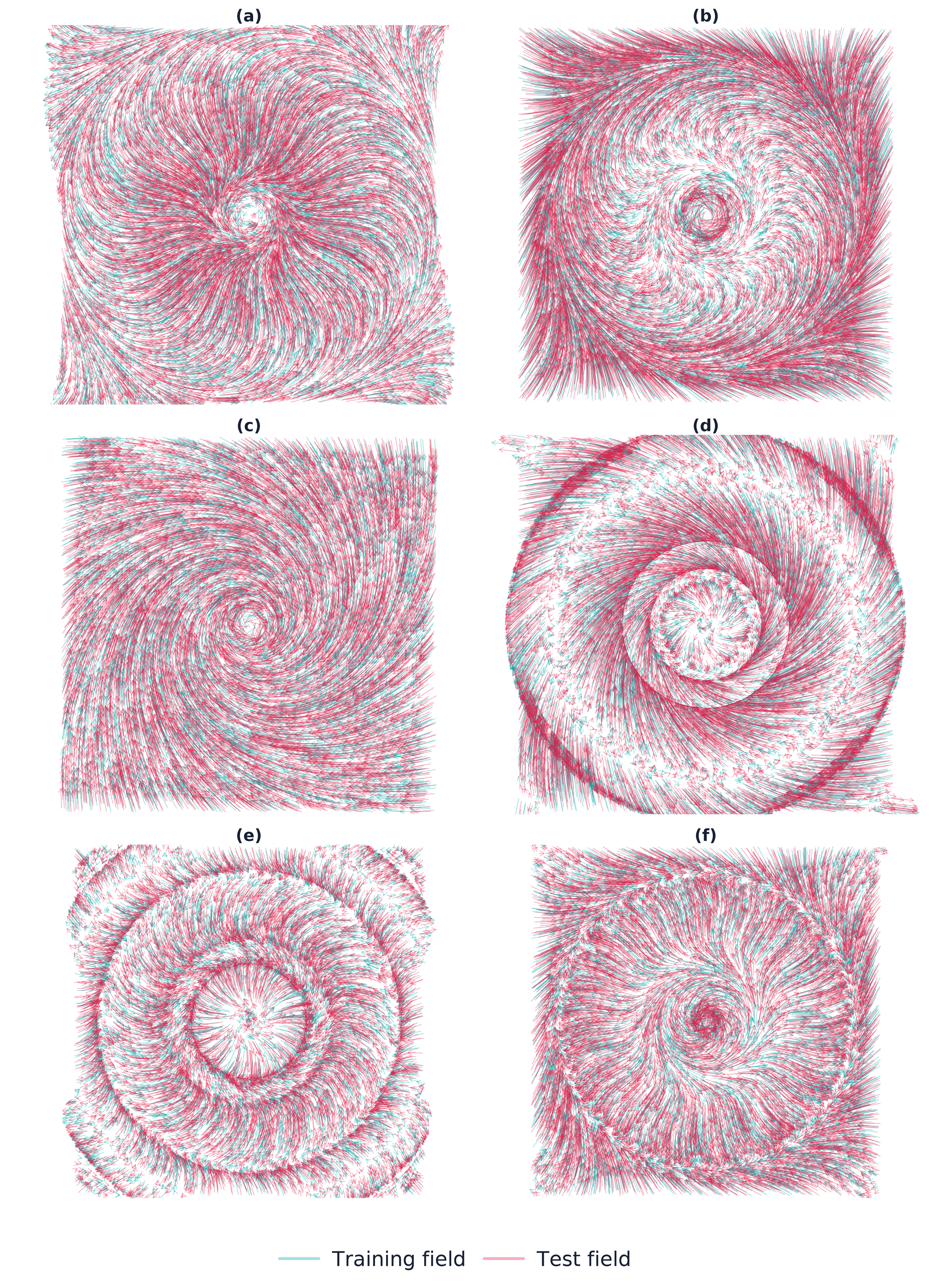}
    \caption{Further prior samples of $\mathrm{GP}(\boldsymbol{0},\Kpi)$, with hyperparameters sampled uniformly over $[0,20]^4$.}
    \Description{Six prior samples arranged in a three-by-two grid. Each panel overlays training and test vectors from a sampled SO(2)-equivariant vector field.}
    \label{fig:extra prior samples}
\end{figure}

\end{document}